\documentclass[10pt]{article} 
\usepackage[preprint]{tmlr}

\usepackage{amsmath,amsfonts,bm}

\def\eqref#1{equation~\ref{#1}}

\def\1{\bm{1}}

\DeclareMathAlphabet{\mathsfit}{\encodingdefault}{\sfdefault}{m}{sl}
\SetMathAlphabet{\mathsfit}{bold}{\encodingdefault}{\sfdefault}{bx}{n}

\DeclareMathOperator*{\argmax}{arg\,max}
\DeclareMathOperator*{\argmin}{arg\,min}

\usepackage{xcolor}
\usepackage{float}
\usepackage{hyperref}
\usepackage{url}
\usepackage{amssymb}
\usepackage{amsthm}
\usepackage{graphicx}
\usepackage{subcaption}

\usepackage{tcolorbox}
\usepackage{enumitem}
\usepackage{xcolor}

\usepackage{makecell}

\usepackage{placeins}
\usepackage{booktabs}

\newtheorem{definition}{Definition}
\newtheorem{assumption}{Assumption}
\newtheorem{theorem}{Theorem}
\newtheorem{lemma}{Lemma}
\newtheorem{remark}{Remark}

\newtheorem*{proofsketch}{Proof sketch}

\usepackage{algorithm}
\usepackage{algpseudocode}

\title{Point-Identification  of a Robust 
 Predictor Under  Latent \\ Shift with Imperfect Proxies}

\author{\name Zahra Rahiminasab \email zahra.rahiminasab@aalto.fi \\
      \addr Department of Computer Science\\
      Aalto University 
      \AND
      \name Reza Soumi \email reza.soumi@aalto.fi\\
      \addr Department of Computer Science\\
      Aalto University
       \AND
      \name Arto Klami \email  arto.klami@helsinki.fi  \\
       \addr Department of Computer Science\\
       University of Helsinki
               \AND
      \name Samuel Kaski \email samuel.kaski@aalto.fi\\
 \addr      ELLIS Institute Finland
      \\
      \addr Department of Computer Science\\
      Aalto University
      \\
      \addr Department of Computer Science\\
      Manchester University
      }

\renewcommand{\argmin}{\operatorname*{arg\,min}}

\def\month{05}  
\def\year{2026} 
\def\openreview{\url{https://openreview.net/forum?id=QFJuVreJDC}} 

\begin{document}

\maketitle

\begin{abstract}

Addressing the domain adaptation problem becomes more challenging when distribution shifts across  domains stem  from latent confounders that affect both covariates and outcomes. Existing proxy-based approaches that address latent shift rely on a strong completeness assumption to uniquely determine (point-identify) a robust predictor. Completeness  requires that  proxies have  sufficient information about variations in latent confounders. For imperfect proxies the mapping from confounders to the space of  proxy distributions is non-injective, and multiple latent confounder values can generate the same proxy distribution. This breaks the completeness assumption and observed data are consistent with multiple potential predictors (set-identified).  To address this, we  introduce latent equivalent classes (LECs). LECs are defined as groups of latent confounders that induce the same conditional proxy distribution. We show that point-identification for the robust predictor remains achievable as long as multiple domains differ sufficiently in how they mix  proxy-induced LECs to form the robust predictor. 
This domain diversity condition 
is formalized as a cross-domain rank condition on the mixture weights, which is substantially weaker assumption than completeness. 
 We introduce the Proximal Quasi-Bayesian  Active learning (PQAL) framework, which actively queries 
 a small, targeted set of diverse domains that satisfy this rank condition. PQAL can  recover the point-identified predictor,  demonstrates robustness  to varying degrees of shift and  outperforms previous methods on synthetic data and semi-synthetic dSprites, IHDP, ACS Folktables datasets.
\end{abstract}

\section{Introduction}

Domain adaptation transfers models trained on labeled source domains to a target domain, where 
labeled data is either unavailable entirely  \citep{tsai2024proxy} or restricted to a few samples (few-shot domain adaptation) \citep{motiian2017few}.  To  successfully adapt a trained predictor from the source to the target domain, some assumptions about the type of  data distribution shift must be made \citep{david2010impossibility}. Traditionally, covariate \citep{shimodaira2000improving} or label \citep{lipton2018detecting} shift assumptions are used. However, these assumptions do not always hold in real-world settings. 
In such settings, let $S$  source domains and a target domain are indexed by $Z \in \mathcal{Z} =\{1,\dots,S,S+1\}$.  Then, the shift in distribution between source and target domains is due to a latent confounder $U \in \mathcal{U}$, such that $P(U|Z=i) \neq P(U|Z=S+1)$  for all $i\in \{1,\dots,S\}$, which affects  the covariates $X \in \mathcal{X} \subseteq \mathbb{R}^{d_X}$ and the outcomes $Y\in \mathcal{Y} \subseteq \mathbb{R}^{d_Y}$ \citep{,alabdulmohsin2023adapting,tsai2024proxy}.

Due to such unobserved confounders $U$, the predictors may learn spurious correlations between $X$ and $Y$. Previous studies, such as \citet{tsai2024proxy,alabdulmohsin2023adapting,prashant2025scalable}, use  proxies $W \in \mathcal{W}$ to address the  latent shift. They assume that the proxy is informative enough to  uniquely identify (point-identify) the parameter or function of interest, based on the probability of the observed data \citep{manski2005partial}. 
For discrete unobserved confounders and  proxies,  being  informative  translates to the  invertibility of the conditional distribution of the proxy  given the unobserved confounder $P(W|U)$, which is often formalized as  a rank condition \citep{alabdulmohsin2023adapting,prashant2025scalable,tsai2024proxy}. In contrast, when  unobserved confounders and proxies are continuous  \citep{tsai2024proxy}, the completeness assumption must hold, which ensures  injectivity of the conditional operator induced by $P(W|U)$\citep{miao2018identifying}.
These assumptions break in practice when proxies are imperfect or insufficiently informative. 
More precisely,  when the $P(W|U)$ is not invertible, or the conditional expectation operator is not injective. In such cases,  parameters or functions of interest can only be partially identified (set-identified).

In human-in-the-loop learning, proxies are not passively observed; instead, they are actively collected through budget-limited interactions with a human expert. As a result, these proxies are often noisy or weak  \citep{slomanproxy}. Examples include  expert annotations \citep{srivastava2020robustness} or feedback \citep{nikitin2022human}.
Since acquiring proxies requires costly interactions, only a limited number of proxies can be obtained in practice.
Consequently, it is important to identify robust predictors while strategically selecting these imperfect  proxy queries under strict budget constraints. We address the aforementioned problems by making the following contributions.

\begin{itemize}
    \item \textbf{Relaxing completeness via latent equivalent classes (LECs):}
    This paper addresses domain adaptation under latent shift, when proxies are imperfect and do not  satisfy the standard completeness assumption. When the completeness assumption does not hold, distinct values of the latent confounder may generate the same conditional proxy distribution $P(W|U)$. We introduce latent equivalent classes (LECs) as  sets of latent confounder values that are observationally indistinguishable  given the proxy.  Although individual latent confounder values cannot be recovered, the conditional proxy distribution in each domain can be represented as  a mixture over these LECs. This decomposition enables  tracking  latent shifts by leveraging  variation in mixture weights across domains.
    \item \textbf{Point-identification via distinguishing environment set:}
    Based on the obtained mixture formulation, we prove that the robust predictor ($\mathbb{E}[Y|X=x,Z=S+1]$, where  $X$ is a covariate and  $Y$ is an outcome. In addition, environments are indexed by discrete variable $Z$.)
    is point-identified as long as the observed domains exhibit sufficient diversity in their mixture weights over LECs. We call such environments distinguishing environments $Z^{\star} \subseteq \mathcal{Z}$. This condition is formalized as a simple cross-domain rank condition. We formally prove that this rank condition is strictly weaker than the completeness assumption, as it depends on variation across environments rather than on the injectivity of the conditional operator in each environment.
    \item \textbf{Proximal Quasi-Bayesian Active Learning (PQAL):}
      We propose the Proximal Quasi-Bayesian Active Learning (PQAL) framework to train a point-identified robust predictor.
    PQAL actively queries proxies to satisfy the required rank condition.  We show  that PQAL  recovers a point-identified predictor and achieves lower mean squared error than  state-of-the-art methods across varying degrees of latent distribution shift on synthetic and semi-synthetic dSprites, IHDP, ACS Folktables datasets.  
\end{itemize}

\section{Problem formulation}\label{sc:Prob}

Let  $ \mathcal{X}$ denotes  the space of observed covariates $X$, and $ \mathcal{Y}$
the space of outcomes $Y$.
We define a set of $S$ sources and a target environment with  domain  IDs $Z \in \{1,\dots, S, S+1\}$. 
In every environment, both the covariates and the outcomes are influenced by an unobserved confounder $U$, which varies across the environments. That is, $p(U|Z=z)$. This distribution is also unknown, but we assume that partial knowledge about it can be obtained via a proxy $W$ that  contains indirect and imperfect information regarding changes in the distribution of unobserved confounders $U$. 
When proxies are imperfect, multiple latent values from $U$ can induce the same conditional proxy distribution $P(W\mid U)$. 
For instance, periodically repeating confounders, like orientation where $2\pi + \theta$  and $\theta$ induce the same proxy distribution (see dSprites dataset in the Section \ref{sec:data} for an example, where we consider domains that differ by such orientation).

\begin{assumption}[Latent shift (from \citet{tsai2024proxy})]
Latent shift is translated to the following  conditions:
\begin{enumerate}
     \item The shift between source distribution $P$ and a target distribution $Q$ stems from unobserved confounder $U$. Therefore $P(U) \neq Q(U)$. Also, $\forall z_i,z_j \;\text{when}  \; i \neq j$, $P(U|z_i) \neq P(U|z_j) \neq Q(U)$.

    \item Invariant distributions: 
    a conditional distribution of observed covariates, proxy, and an outcome for an unobserved confounder remain invariant between source and target domains $P(R|U)=Q(R|U)$ for $R \subseteq \{X,Y,W\}$. 
\end{enumerate}
\label{ass: shift}
\end{assumption}
Figure \ref{fig:CAD} illustrates the causal DAG that complies with the latent shift (Assumptions \ref{ass: shift}).

\begin{figure}
    \centering
    \includegraphics[width=0.3\linewidth]{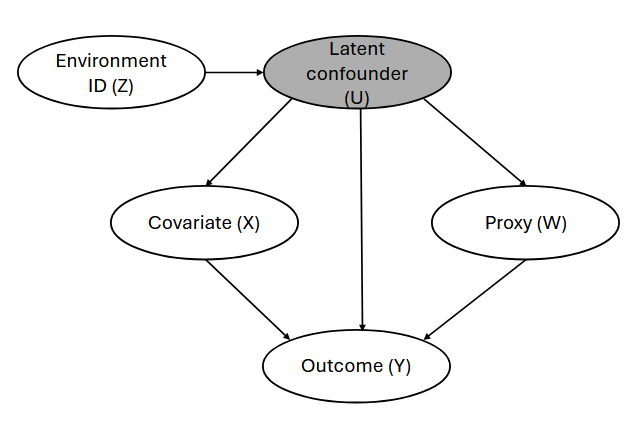}
    \caption{Causal diagram that complies with the latent  shift assumption}
    \label{fig:CAD}
\end{figure}

\paragraph{Problem:} Given $m$ labeled data   from $S$ source domains with observed variables $\{(x_i,w_i,z_i,y_i)\}_{i \in \{1,\dots,m\}}$,  where $W$ is imperfect proxy for the latent confounder $U$. 
We  observe variables $\{(x_j,z_j)\}_{j \in \{1,\dots,n\}}$ and we may also observe  $w_j$  from target domain ($S+1$).  The goal is to uniquely determine (Point-identify) the optimal predictor $\mathbb{E}[Y|X=x,Z=S+1]$ on  a target domain based on the joint distribution of the observed data.
\paragraph{Learning setup:}
Proxies are usually obtained through costly interactions, such as expert feedback.  Therefore, only a limited number of proxies  can be queried from a pool of candidate samples. This practical constraint motivates the use of an active selection strategy  to query the most informative proxies for  learning a point-identified robust predictor.

\begin{tcolorbox}
    [colback=blue!5!white,
    colframe=blue!75!black,
    title = \textbf{Motivating Example: Robust Drone Design Under Manufacturing shift},
    boxrule=0.6pt,
    arc=2mm,
    left=2mm, right=2mm, top=1mm,bottom=1mm,
    before skip=2mm, after skip=2mm,
    title filled=false,
    fonttitle= \bfseries,
    ]
    \setlength{\parskip}{0pt}
    \setlength{\itemsep}{1pt}
    \setlength{\topsep}{1pt}
    \setlength{\parsep}{0pt}
    \setlength{\partopsep}{0pt}

    \textbf{Setting:}
    Consider $S$ different manufacturers (indexed by $Z$) constructing drones characterized by covariates $X$ (e.g., battery mass, wing chord). Both the covariates $X$ and the function that maps them to observable properties $Y$ (e.g. flight speed and hovering time) depend on unobservable confounders $U$, such as manufacturing quality.
    While $U$ cannot be observed directly, we assume a proxy variable $W$ to provide partial and indirect information about it. For example, a human expert inspecting the drones can estimate the overall quality of all manufacturers, but does this in a crude manner: If they cannot differentiate between the quality of some manufacturers they provide the same assessment for them.

    \textbf{Goal:} Train a point-identifiable predictor estimating the properties $Y$ for a new manufacturer, based on the $\{X, Y, W\}$ triplets for the $S$ manufacturers and the proxy variable $W$ for the new manufacturer.
    
\end{tcolorbox}

\section{Notation and Background}

\textbf{Notation:} For any variable $\mathcal{V} \in \{\mathcal{X},\mathcal{W},\mathcal{Z},\mathcal{Y}\}$, let $k: \mathcal{V} \times \mathcal{V} \rightarrow \mathbb{R}$ be a positive semi-definite kernel function, and for every $v \in \mathcal{V}$, feature maps are defined as $\phi(v)= k(v,.)$. The Reproducing Kernel Hilbert Space (RKHS)  \citep{murphy2012machine} is defined on $\mathcal{V}$ corresponding to defined kernel functions. $\mathcal{K}_{V} =[k(v_i,v_j)]$ are gram matrices and $\mathcal{K}_{VV'} =[k(v_i,v'_j)]$ are cross-gram matrices. Also, $\Phi_V(v')=[k(v_1, v')\dots k(v_{|V|},v')]$. In addition, $\otimes$, $\bar{\otimes}$, and $\odot$ are the tensor product, column-wise Khatri-Rao product, and Hadamard products, respectively. 
$\mathcal{D}_{\mathcal{LB}}=\{(x_i,w_i,z_i,y_i)\}^{m_1}_{i=1}$ is a set of labeled samples from the source domains. Also, $\mathcal{D}_{\mathcal{PL}}=\{(x_j,z_j)\}_{j=1}^n$ is a pool of candidate samples without proxies. In addition,  $\mathcal{D}_{\mathcal{PR}}^{r}=\{(x^{(r)}_j,w^{(r)}_j,z^{(r)}_j)\}^{n_{\mathcal{PR}}}_{j=1}$ is a set of queried samples with proxy  at  round $r$ from  source and  target environments. Finally, $\mathcal{D}_{\mathcal{TLB}}^{r}=\{(x^{(r)}_k,w^{(r)}_k,z^{(r)}_k,y^{(r)}_k)\}^{n_{\mathcal{TLB}}}_{k=1}$  is a set of labeled samples from a target environment where $n_{\mathcal{TLB}} \ll n_{\mathcal{PR}}$.
 Throughout   the paper, $\lambda$ with different subscripts denotes regularization hyperparameters.

Following \citet{tsai2024proxy}, we   adapt  identification techniques  from  proximal causal inference (PCI) to provide point-identification guarantees for predictors.
In addition, we use the expected predictive information gain (EPIG)  to design an acquisition function that selects the most informative  queries for distinguishing LECs.  Therefore, we review PCI and EPIG in this section.

\subsection{Proximal causal inference} \label{scc:PCI}
Proximal causal inference (PCI) \citep{tsai2024proxy} uses proxies to adjust for the effect of unobserved confounders.
  PCI solves linear inverse problem $\mathbb{E}[Y-h(X,W)|X=x,Z=z]=0$, utilizing a conditional expectation operator ($\mathcal{T}$) to map a bridge function $h$ to the domain specific predictor ($\mathbb{E}[Y|X=x,Z=z]$) \cite{wang2021scalable,bennett2023inference}. The bridge function $h$  connects covariates $X$ and proxies $W$ to outcome $Y$ by integrating over the unobserved confounder $U$. Formally, PCI estimates the optimal predictor $\mathbb{E}[Y|X=x,Z=z]$ based on a bridge function $h$ and conditional distribution $p(w|x,z)$ by integrating over unobserved confounders $u$ as 
\begin{equation}
\begin{aligned}
&\mathbb{E}[Y|X=x,Z=z]=\int_{\mathcal{U}}\int_{\mathcal{W}}  h(x,w) p(w|u)p(u|x,z) dwdu=\int_{\mathcal{W}} h(x,w) p(w|x,z) dw = \mathcal{T} h.
\end{aligned} 
\label{eq:PCI}
\end{equation}

Establishing the point-identification of a predictor $\mathbb{E}[Y|X=x,Z=S
+1]$ in target environment based on Equation \ref{eq:PCI}  generally involves two steps \citep{miao2018identifying,tsai2024proxy}:
\begin{enumerate}
\item Establishing the existence of a bridge function based on solvability and regularity assumptions (e.g., Picard/range conditions  and square integrability)
\item Point-identification of  optimal predictor  based on the existence of a bridge and by assuming completeness.
\end{enumerate}

 The completeness assumption ensures  the injectivity of the conditional expectation operator $\mathcal{T}$ by  enforcing the following conditions \citep{melnychuk2024quantifying}:
 \begin{enumerate}
     \item Given square integrable function $l$, $\mathbb{E}[l(U)|x,z]=0$ for all $(x,z)\in \mathcal{X} \times \mathcal{Z}$ if and only if $l(U)=0$ almost surely.
     \item Given square integrable function $g$, $\mathbb{E}[g(Z)|x,w]=0$ for all $(x,w)\in \mathcal{X} \times \mathcal{W}$ if and only if $g(Z)=0$ almost surely.
 \end{enumerate}

 In a discrete setting, the first completeness condition corresponds to  invertibility of the matrix representation of $P(U\mid X,Z)$, which captures how latent variables vary across covariates and environments. The second completeness condition translates to invertibility of the matrix representation of  $P(W|U)$, which determines the amount of information proxy carries about the latent confounder. Our framework addresses the failure of the proxy mechanism when  the second completeness condition is violated by introducing a cross-domain rank condition based on variation in latent distributions across environments.
As our theoretical framework is formulated on general measurable (Borel) spaces, 
  the same arguments apply when the variables are discrete, with integrals replaced by finite sums and operators  represented as matrices.

To solve this inverse problem,   the bridge function $h$ and the conditional distribution $p(w|x,z)$ are estimated by  Kernel Proxy Variable (KPV) \citep{mastouri2021proximal}.
KPV  learns the conditional mean operator  $\mathcal{T}$ in the reproducing kernel Hilbert Space (RKHS) via 
$\mathcal{T}= \phi(X) \otimes\mu_{W|X=x,Z=z}$, where $\mu_{W|X=x,Z=z}=\mathbb{E}[\phi(W)|x,z]$ is conditional mean embedding (CME). Then,  by combining learned bridge function and CMEs, the optimal predictor $\mathbb{E}[Y|X=x, Z=z]$ is trained.

\subsection{Expected predictive information gain (EPIG)}
Expected predictive information gain (EPIG) selects candidate samples $x$ that maximize expected reduction in predictive uncertainty at a target input $x_{\star} \sim p_{\star}(x_{\star})$ \citep{smith2023prediction}.  EPIG is defined  based on the conditional  mutual information ($I$) function  between candidate label $y$ and target label $y_*$ as follows:
\begin{equation}
    EPIG(x) =  \mathbb{E}_{p_*(x_{*})}[I(y;y_*|x,x_*)]
\end{equation}

\section{Point-identification   of  predictor}

In this section,  we  formalize the relationship between an unobserved confounder $U$ and a proxy $W$ by introducing the concept of imperfect proxies. Then, we establish  theoretical results for the point-identifiability of  $\mathbb{E}[Y|X=x,Z=S+1]$ despite imperfect proxies.

\subsection{Imperfect proxies and LECs}

First, we  formalize the relationship between an unobserved confounder $U$ and a proxy $W$ by introducing the concept of imperfect proxies.

\begin{definition} [Imperfect proxies]
    The proxy $W$ is imperfect in capturing the latent confounder $U$ if there is more than one value of a latent variable $U$ that can induce its conditional distribution $P(W|U)$.
    Formally, there exists $u \neq u'$ such that:
    \begin{equation}
        P(W\in A|U=u) =  P(W\in A|U=u'), \quad \forall A \in \mathcal{A} \quad \text{where\;} (\mathcal{W},\mathcal{A}) \text{\; is the measurable space of \;} W
    \end{equation} 
     \label{df:Imp}
\end{definition}

Imperfect proxies form groups of latent values that are  indistinguishable by proxy $W$. We formalize these groups as latent equivalent classes (LECs).

\begin{definition}[Latent equivalent classes (LECs)]
     Two values $u$ and $u'$  of latent variable $U$ belong to the same  latent equivalent  class $O$ with respect to $W$ ($u \sim_{O} u'$ ) if and only if  $P(W \in A| U=u) = P(W \in A| U=u') (\forall A \in \mathcal{A}) $.  Each latent equivalence class $
     O_k= \{u:u\sim_{O_k} u'\}$ contains all latent confounder values that induce the same conditional proxy distribution. As a result, proxy $W$ cannot distinguish values within the same LEC; however, it can distinguish across different LECs. Therefore, LECs form a partition of the  entire unobserved confounder space $\mathcal{U}$.
 Formally, we have 
 \begin{equation}
        \mathcal{U}= \bigcup_{j=1}^{|O|} O_j \text{and} \; O_i \cap O_j =\varnothing    (\forall i\neq j)
    \end{equation}
     \label{df:LECS}
\end{definition}

The following remark ensures the proxy does not collapse all latent confounder values into a single LEC. Otherwise proxy has no information about domain-induced changes in $U$, and the domain adaptation problem is trivial and unsolvable.

\begin{remark}[Non-degenerate  LECs]\label{RM:NoNLEC}
     To avoid trivial domain adaptation problem, we assume proxy $W$ is non-degenerate, meaning it 
     induces at least two distinct latent classes ($|O| \geq 2$). 
\end{remark}

\subsection{Theoretical results}\label{sc:Theory}

 Lemma \ref{lm:CMEDEC}    shows that  the conditional  distribution $P(W \mid X=x,Z=z)$ 
can be decomposed as a mixture over LECs. Establishing this decomposition is a key step for the point-identification of the  predictor $\mathbb{E}[Y|X=x, Z=S+1]$.
 To ensure the validity of our derivations, the  following regularity conditions must hold.

\begin{assumption} [Regularity condition]
We assume the domains $\mathcal{X}$, $\mathcal{Y}$, $\mathcal{W}$,  $\mathcal{Z}$, and $\mathcal{U}$ are  Borel spaces with $\sigma$-algebra. Also, the  conditional distribution $P(W|U)$  exists. In addition,  LECs $\{O_j\}_{j=1}^{\mid O \mid }$ are measurable sets with respect to the $\sigma$-algebra of $U$. 
  \label{ass:Reg} 
\end{assumption}
\begin{lemma}[Decomposition of Conditional Distribution]
Let $\{O_j\}_{j=1}^{|O|}$ be the latent equivalent classes (LECs) defined in Definition \ref{df:LECS}. Assume the regularity condition of Assumption \ref{ass:Reg} holds. For each LEC $O_j$, define  its conditional distribution   $P_j(A)=P(W\in A \mid U\in O_j)$, the corresponding mixing weight $\pi_j(x,z):=P(U \in O_j\mid X=x,Z=z)$.
Under latent shift assumption (Assumption \ref{ass: shift}), and its corresponding causal DAG (Fig \ref{fig:CAD}), $W \perp (X,Z) \mid U$ holds. This implies that  the conditional distribution of $W$, has the mixture representation
 \begin{equation}
     P(W \in A|X=x,Z=z)= \sum_{j=1}^{|O|} \pi_j(x,z) P_j(
     A
     ) \quad \text{for any measurable set\;} A \subseteq \mathcal{W}.
 \end{equation}
 
\begin{proof}
    Refer to Appendix \ref{sc:lmCMEDEC} for the proof.
\end{proof}
\label{lm:CMEDEC}
\end{lemma}

If the mixture weights in Lemma \ref{lm:CMEDEC} do not vary across environments, then the  proxy distribution $P(W|X,Z)$ provides no information to distinguish among different LECs. Therefore, the mixture weights must be linearly independent across environments. We formally define such environments as follows:
\begin{definition} [Distinguishing environment set]
Let $Z^{\star}=\{z_1,\dots,z_e\}$ be a set of $e$ source   environments. Let  $\pi_j(x,z)$ be  mixture weights   defined in Lemma \ref{lm:CMEDEC}. For almost every $x$ with respect to $P(x)$, we say that $Z^{\star}$ \textbf{distinguishes} latent equivalent classes $\{O_1,\dots,O_{|O|}\}$, if the matrix  $\Pi (x)=[\pi_j(x,z_l)]_{l,j}, l=1,\dots,e, j=1,\dots,|O|$ has full rank $|O|$. 
\label{df:DE}
\end{definition}

  With imperfect proxies, the second completeness condition fails, and the operator $g \mapsto \mathbb{E}[g(Z)|x,w]$ is not injective. Nevertheless, Theorem \ref{Th:RankCom} shows that the cross-domain rank condition in Definition \ref{df:DE} is  strictly weaker than the first completeness condition over operator $l \mapsto \mathbb{E}[l(U)|x,z]$. With this rank condition, the  point-identification of $\mathbb{E}[Y|X=x,Z=S+1]$ remains possible.

\begin{theorem} [Cross-domain rank condition and completeness]
    The cross-domain rank condition in Definition \ref{df:DE} is strictly weaker than the first completeness assumption in the following sense: it only requires injectivity at the level of proxy-induced latent equivalent classes, whereas the first completeness assumption requires injectivity over full latent space.
    \begin{proof}
        Refer to Appendix \ref{sc:THRankCom} for the proof.
\end{proof}
    \label{Th:RankCom} 
\end{theorem}

In practice, the cross-domain rank condition requires non-redundancy of  the observed source environments. In other words, these environments must capture fundamentally different distributions of the latent confounder by mixing the proxy-induced LECs in sufficiently diverse ways. Although the mixture matrix  $\Pi(x)$ is latent, this requirement is empirically evidenced as a high effective rank of the stacked environment-specific CMEs. 
We empirically demonstrate this relationship between environment diversity, proxy imperfectness, and effective rank in Section \ref{sc:EffImp}. 

As discussed in Section \ref{scc:PCI}, establishing the existence of a bridge is not guaranteed by completeness; it requires specific solvability and regularity assumptions. We formalize the solvability assumptions as follows.

\begin{assumption} [LEC moment condition]
Consider distinguishing environment set $Z^{\star}$ and mixture weight matrix $\Pi (x)=[\pi_j(x,z_l)]_{l,j}, l=1,\dots,e,  j=1,\dots,|O|$ (Definition \ref{df:DE}). Let $r(x,z) = \mathbb{E}[Y \mid X=x,Z=z]$, define $r_x:=(r(x,z_l))_{l=1}^e$. Then, we assume the following condition holds:
\begin{enumerate}
    \item \textbf{Compatibility:} For almost surely x, $r_x \in Range(\Pi
    (x))$. Equivalently, there exists a measurable function $g(x) \in R^{\mid O\mid}$ such that $\Pi(X)g(x)=r_x$.
    \item \textbf{LEC proxy spanning:}
    $p_j \in L_2(\mathcal{W})$ for all $j=1,\dots,|O|$, and the gram matrix $G_{jk}:=\int_W p_j(w)p_k(w)dw \quad j,k=1,\dots,|O|$ is invertible.
    \item \textbf{Square-integrability} The function $g(x)=g_1(x), \dots, g_{\mid O\mid}(x)$ can be chosen such that $g_j\in L_2(\mathcal{X})$ for all $j$.
\end{enumerate}
\label{as:LECMom}
\end{assumption}
\begin{remark} [Role and verifiability of Assumption \ref{as:LECMom}]
     Assumption \ref{as:LECMom} adapts traditional solvability assumptions to the LEC-based framework:
    \begin{itemize}
        \item Compatibility is analogous to Picard/range condition (e.g. Assumption (vii) in \citet{miao2018identifying}; Assumption 2 in Proposition A.2 of \citet{tsai2024proxy}). It requires that outcome-relevant latent variations are distinguishable across different LECs. Compatibility is not directly testable as LECs are latent. However, it can be tested indirectly. If using more informative proxies or environment diversity raises the effective rank without improving target prediction, this suggests the proxy may not capture the outcome-relevant latent shift.
        \item LEC proxy spanning corresponds to operator non-degeneracy and boundedness requirements (e.g. Assumption (v) in \citet{miao2018identifying}; Assumption 2 in Proposition A.2 in \citet{tsai2024proxy}. It ensures that proxy distribution induced by different LECs provides non-redundant direction in proxy space. Although the LECs are latent, their mixtures appear in the observable CMEs. Therefore, the singular values and rank of stacked CMEs provide a direct empirical diagnostic for this condition. We explicitly test this diagnostic and show how proxy imperfection impacts the effective rank in section \ref{sc:EffImp}.
\item Square integrability is a standard regularity condition   ensuring the bridge lies in a stable $L_2$ function class (e.g. Assumption (vi) from \citet{miao2018identifying}; Assumption 3 in Proposition 4.2 of \citet{tsai2024proxy}). It is not directly testable for finite data. However, in practice, the use of bounded kernels and ridge regularization restricts the learning process to a stable class, ensuring this condition holds.
    \end{itemize}
\end{remark}

The  next theorem shows the existence and set-identifiability of the bridge function, which is required for guaranteeing point-identification of $\mathbb{E}[Y|X=x,Z=S+1]$. 
\begin{theorem} [Existence and set identifiability of bridge function]
    Consider imperfect proxies (Definition \ref{df:Imp}) and let $Z^{\star}$ be a distinguishing environment set  (Definition \ref{df:DE}). Also, Lemma \ref{lm:CMEDEC} holds.
     Under Assumption \ref{as:LECMom}, the  following hold:
    \begin{itemize}
        \item There exists at least one bridge function $h$ such that:
          \begin{equation}
          \mathcal{T} _{joint}h=r, \qquad \text{that is,} \quad \mathcal{T} _zh =r(.,z) \quad \forall z \in Z^{\star}
        \end{equation}
        \item The set of all bridge functions solving $\mathcal{T}_{joint}h=r$ is an affine set:
        
         \begin{equation}
    \begin{aligned}
         \mathcal{H}(r)=\{h:\mathcal{T}_zh=r(.,z) (\forall z \in Z^{\star})\} 
        =h_0+\bigcap_{ z \in Z^{\star}} ker(\mathcal{T}_z)
    \end{aligned}
    \end{equation}
    
    where $ker(\mathcal{T}_z)$ is null space of $\mathcal{T}_z$ and $h_0$ is any particular solution. 
    \end{itemize}
\begin{proofsketch}
Under Assumption \ref{as:LECMom} and Lemma \ref{lm:CMEDEC} $p(w \mid x,z)=\sum_{j=1}^{\mid O \mid} \pi_j(x,z)p_j(w)$, the equation $\mathcal{T} _{joint}h=r$ can be written for each $x$ as the linear system $r_x= \Pi(X)g(x)$ where $g_j(x)=\int_{\mathcal{W}}h(x,w)p_j(w) dw$. The compatibility assumption guarantees such $g(x)$ exists, and the LEC proxy spanning assumption allows constructing $h_0\in  L_2(\mathcal{X} \times \mathcal{W})$ with these moments. Therefore, there exists at least one bridge function that solves $\mathcal{T}_{joint}h_0=r$. Due to linearity of $\mathcal{T}_{joint}$, any two solutions $h_1$ and $h_2$ to $\mathcal{T}_{joint}h=r$ satisfy $h_2-h_1 \in ker(\mathcal{T}_{joint})=\bigcap_{ z \in Z^{\star}} ker(\mathcal{T}_z)$. Therefore, $\mathcal{H}(r)= h_0+\bigcap_{ z \in Z^{\star}} ker(\mathcal{T}_z)$ (Refer to Appendix \ref{sc:exbrg} for extended proof).
\end{proofsketch}
    \label{th:exbrg}
\end{theorem}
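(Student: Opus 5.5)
We prove Theorem \ref{th:exbrg} by reducing the operator equation to a finite linear system for each fixed $x$, solving that system with Assumption \ref{as:LECMom}, lifting the solution back to a bridge function, and then using linearity for the affine description.

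\textbf{Step 1 (reduction).} For $z\in Z^{\star}$ let $(\mathcal{T}_z h)(x)=\int_{\mathcal{W}} h(x,w)\,p(w\mid x,z)\,dw$. By Lemma \ref{lm:CMEDEC}, in density form,
\begin{equation}
(\mathcal{T}_z h)(x)=\sum_{j=1}^{|O|}\pi_j(x,z)\,g_j^h(x),\qquad g_j^h(x):=\int_{\mathcal{W}} h(x,w)\,p_j(w)\,dw .
\end{equation}
Interchanging sum and integral is justified because the sum is finite. Each $g_j^h(x)$ is well defined when $h(x,\cdot)\in L_2(\mathcal{W})$, since $p_j\in L_2(\mathcal{W})$ and Cauchy--Schwarz applies. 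Stacking over $z_1,\dots,z_e$, the condition $\mathcal{T}_{joint}h=r$ becomes
\begin{equation}
\Pi(x)\,g^h(x)=r_x \quad \text{for almost every } x .
\end{equation}

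\textbf{Step 2 (existence).} By compatibility, there is a measurable $g(x)$ with $\Pi(x)g(x)=r_x$. By square-integrability, each $g_j$ may be taken in $L_2(\mathcal{X})$. Define
\begin{equation}
h_0(x,w)=\sum_{k=1}^{|O|} c_k(x)\,p_k(w),\qquad c(x)=G^{-1}g(x),
\end{equation}
where $G$ is the Gram matrix from LEC proxy spanning, which is invertible by assumption. Then
\begin{equation}
\int_{\mathcal{W}} h_0(x,w)\,p_j(w)\,dw=\sum_k c_k(x)\,G_{jk}=(Gc(x))_j=g_j(x),
\end{equation}
so $\Pi(x)g^{h_0}(x)=r_x$ and hence $\mathcal{T}_z h_0=r(\cdot,z)$ for all $z\in Z^{\star}$. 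For integrability, $c=G^{-1}g$ is a fixed linear map of $L_2$ functions, so each $c_k\in L_2(\mathcal{X})$. Each product $c_k(x)p_k(w)$ lies in $L_2(\mathcal{X}\times\mathcal{W})$ with norm $\|c_k\|\,\|p_k\|$, so $h_0$ is a finite sum of $L_2$ functions.

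\textbf{Step 3 (affine structure).} $\mathcal{T}_{joint}$ is linear. If $h_1,h_2\in\mathcal{H}(r)$, then $\mathcal{T}_z(h_1-h_2)=0$ for every $z\in Z^{\star}$, so $h_1-h_2\in\bigcap_{z\in Z^{\star}}\ker(\mathcal{T}_z)$. Conversely, $h_0+k$ with $k$ in that intersection solves every equation. Hence $\mathcal{H}(r)=h_0+\bigcap_{z}\ker(\mathcal{T}_z)$, and $\ker(\mathcal{T}_{joint})$ equals this intersection by definition of the stacked operator.

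\textbf{Main obstacle.} Step 2 requires choosing the solution $g$ of the linear system measurably and in $L_2$ when $\Pi(x)$ is not square. I would take the minimum-norm solution $g(x)=\Pi(x)^{+}r_x$. It is measurable because the Moore--Penrose pseudoinverse is a Borel function of the entries of $\Pi(x)$, which are measurable in $x$. However, its $L_2$-ness does not follow automatically, since $\|\Pi(x)^{+}\|$ may be unbounded. This is exactly why I would invoke square-integrability (part 3 of Assumption \ref{as:LECMom}) to supply an $L_2$ choice directly, rather than try to derive it. A minor secondary point is that the mixture identity holds only for almost every $x$; this is harmless because $L_2$ equalities are almost-everywhere statements.
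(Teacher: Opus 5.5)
Your proof is correct and follows the paper's argument essentially step for step. It uses the same reduction of $\mathcal{T}_{joint}h=r$ to $\Pi(x)g(x)=r_x$ via Lemma~\ref{lm:CMEDEC}, the same construction $h_0(x,w)=\sum_k (G^{-1}g(x))_k\,p_k(w)$ with the moment check through the Gram matrix, and the same linearity argument for the affine solution set; your triangle-inequality bound on $\|h_0\|$ is only a tidier version of the paper's Cauchy--Schwarz/Fubini estimate. One small simplification: since $\Pi(x)$ has full column rank $|O|$ by Definition~\ref{df:DE}, the solution $g(x)$ is unique whenever it exists, so there is no real selection issue, and part~3 of Assumption~\ref{as:LECMom} simply asserts that this unique solution is square-integrable.
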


Finally, based on the rank conditions for LECs and the existence of a bridge function, we prove point-identification of the predictor $\mathbb{E}[Y|X=x,Z=S+1]$.

\begin{theorem} [Point-identification of predictor]
    Assume conditions of Theorem \ref{th:exbrg} and Lemma \ref{lm:CMEDEC} hold.  Consider set of distinguishing environments $Z^{\star}$ (Definition \ref{df:DE}).     Under these conditions, the predictor $\mathbb{E}[Y|X=x,Z=S+1]$  is point-identified.
\begin{proofsketch}
    By Theorem \ref{th:exbrg}, the set of the bridge functions $\mathcal{H}(r)$ is non-empty. For any two solutions $h_1,h_2 \in \mathcal{H}(r)$, let $\Delta h=h_2-h_1$, then  $\mathcal{T}_{joint} \Delta h =0$, implying $\mathcal{T}_{z} \Delta h =0$ for all distinguishing environment set $z \in Z^{\star}$.  By Lemma \ref{lm:CMEDEC}, $\mathcal{T}_z$ decomposes into latent operators $\mathcal{T}_{O_j}$. By full rank condition of distinguishing environment set, $\mathcal{T}_{S+1}$ can  be expressed as linear combination of distinguishing environments operators $\mathcal{T}_{S+1}=\sum_{z\in Z^{\star}}\gamma_z \mathcal{T}_z$. Consequently, $\mathcal{T}_{S+1}\Delta h=\sum_{z\in Z^{\star}}\gamma_z(\mathcal{T}_zh)=0$. This implies $\mathcal{T}_{S+1}h_1=\mathcal{T}_{S+1}h_2$. Therefore, $\mathbb{E}[Y|X =x,Z=S+1]$ is point-identified.
\end{proofsketch}

    \label{th:Point}
    \end{theorem}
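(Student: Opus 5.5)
The plan is to reduce everything to finite-dimensional linear algebra at the level of LECs, fixing $x$ and working pointwise for almost every $x$.

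\textbf{Step 1: reduce each operator to finitely many LEC moments.} By Lemma \ref{lm:CMEDEC}, for any $z$ (including $z=S+1$) and any $h\in L_2(\mathcal{X}\times\mathcal{W})$,
\begin{equation*}
(\mathcal{T}_z h)(x)=\sum_{j=1}^{|O|}\pi_j(x,z)\,g_j^h(x),
\end{equation*}
where $g^h_j(x):=\int_{\mathcal{W}} h(x,w)\,p_j(w)\,dw$. This integral is finite by the $L_2$ assumption on $p_j$ in Assumption \ref{as:LECMom}. The key observation is that the predictor in every environment depends on $h$ only through the finite vector $g^h(x)\in\mathbb{R}^{|O|}$.

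\textbf{Step 2: the distinguishing set pins down $g^h$.} Take $h_1,h_2\in\mathcal{H}(r)$; this set is nonempty by Theorem \ref{th:exbrg}. Set $\Delta h=h_2-h_1$. Then $\mathcal{T}_z\Delta h=0$ for all $z\in Z^{\star}$, which in vector form reads $\Pi(x)\,g^{\Delta h}(x)=0$. Since $\Pi(x)$ has full column rank $|O|$ for almost every $x$ (Definition \ref{df:DE}), we get $g^{\Delta h}(x)=0$ almost everywhere. Hence $g^{h_1}=g^{h_2}$, so the LEC moments are common to all admissible bridges. Moreover, they are expressible from observables as $g(x)=\Pi(x)^{+}r_x$ with the left pseudo-inverse.

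\textbf{Step 3: transfer to the target.} By Step 1 applied at $z=S+1$,
\begin{equation*}
\mathcal{T}_{S+1}h_1-\mathcal{T}_{S+1}h_2=\pi(x,S+1)^{\top}g^{\Delta h}(x)=0 .
\end{equation*}
Equivalently, full column rank means the row space of $\Pi(x)$ is all of $\mathbb{R}^{|O|}$. So $\pi(x,S+1)^{\top}=\gamma(x)^{\top}\Pi(x)$ for some $\gamma(x)$, which gives the decomposition $\mathcal{T}_{S+1}=\sum_{z\in Z^\star}\gamma_z\mathcal{T}_z$ used in the sketch.

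It then remains to argue that $\mathcal{T}_{S+1}h$ equals $\mathbb{E}[Y|X=x,Z=S+1]$. For this I would invoke the invariance in Assumption \ref{ass: shift}, using $Y\perp Z\mid (X,U)$ and $W\perp(X,Z)\mid U$. If the bridge satisfies the standard latent-level relation $\mathbb{E}[Y|x,u]=\int h(x,w)p(w|u)\,dw$, then integrating against $p(u|x,S+1)$ gives the target predictor, which is the same quantity for every $h\in\mathcal{H}(r)$. Finally, $p(w|x,S+1)$ is observed (target proxies), so the resulting value is a functional of the observed distribution and the predictor is point-identified.

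\textbf{Main obstacle.} I expect the main obstacle to be this last link. Solving $\mathcal{T}_zh=r(\cdot,z)$ only on $Z^\star$ must imply that $\mathcal{T}_{S+1}h$ equals the true target regression, rather than merely being unique. My first attempt would be to show that $\mathbb{E}[Y|x,U]$ is constant on each LEC. This is plausible through the compatibility condition, which expresses $r(x,z)=\sum_j \pi_j(x,z)g_j(x)$. I would then define $g_j(x)$ as the LEC-level regression and use the rank condition to match it with $g^h$.
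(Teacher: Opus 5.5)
Your Steps 1--3 are the paper's proof. The paper decomposes $\mathcal{T}_z=\sum_j\pi_j(x,z)\mathcal{T}_{O_j}$, uses full rank to write $\pi(x,S+1)=\sum_{z\in Z^\star}\gamma_z(x)\pi(x,z)$, and concludes $\mathcal{T}_{S+1}h_1=\mathcal{T}_{S+1}h_2$; your route through $g^{\Delta h}(x)=0$ is equivalent. The paper stops there and declares the predictor identified, so it never shows that this common value equals $\mathbb{E}[Y\mid X=x,Z=S+1]$.

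You are right that this link is needed, but your plan for it fails. Compatibility cannot force $\mathbb{E}[Y\mid x,U]$ to be constant on LECs, because $r_x\in\mathrm{Range}(\Pi(x))$ holds automatically whenever $e=|O|$ and $\Pi(x)$ is invertible. Here is a concrete example:
\begin{itemize}
\item Take $X\sim\mathrm{Unif}[0,1]$ independent of everything, and $U\in\{u_1,u_2,u_3\}$ with $O_1=\{u_1,u_2\}$, $O_2=\{u_3\}$.
\item Let $p_1,p_2$ be the $\mathcal{N}(0,1)$ and $\mathcal{N}(1,1)$ densities, and $\mathbb{E}[Y\mid U]=\mathbf{1}\{U=u_2\}$.
\item Set $P(U\mid z_1)=(0.4,0.1,0.5)$, $P(U\mid z_2)=(0.2,0.6,0.2)$ and $Q(U)=(0.6,0.2,0.2)$.
\end{itemize}
Then $\pi(z_1)=(0.5,0.5)$ and $\pi(z_2)=\pi(S+1)=(0.8,0.2)$, so $\Pi$ is invertible and all of Assumption~\ref{as:LECMom} holds. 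Yet $\mathcal{T}_{S+1}=\mathcal{T}_{z_2}$, so every $h\in\mathcal{H}(r)$ returns $r(z_2)=0.6$, while $\mathbb{E}[Y\mid Z=S+1]=0.2$.

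Now replace $Q(U)$ by $(0.25,0.55,0.2)$. No observable changes (the sources and $p(w\mid x,S+1)$ are the same), but the target regression becomes $0.55$. So under the stated hypotheses the target predictor is not identified at all.

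The same example rules out your latent-level route. The quantity $\int h(x,w)p(w\mid u)\,dw$ is automatically constant on each LEC. So a bridge with $\mathbb{E}[Y\mid x,u]=\int h(x,w)p(w\mid u)\,dw$ exists only if $\mathbb{E}[Y\mid x,\cdot\,]$ is already LEC-constant, and membership in $\mathcal{H}(r)$ does not supply that.

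What is missing is therefore an extra assumption, not a derivation. A sufficient one is that the LEC-level regression $\bar m_j(x):=\mathbb{E}[Y\mid X=x,U\in O_j,Z=z]$ is the same for all $z\in Z^\star\cup\{S+1\}$. This is implied by $\mathbb{E}[Y\mid X=x,U=u]$ being constant on each $O_j$, or by domain-invariance of $P(U\mid U\in O_j,X=x,Z=z)$; both fail in the example above. Given it, the argument closes:
\begin{itemize}
\item Total expectation gives $r(x,z)=\pi(x,z)^{\top}\bar m(x)$ for these $z$.
\item Your full-column-rank step then forces $g^h(x)=\bar m(x)$ for every $h\in\mathcal{H}(r)$.
\item Hence $\mathcal{T}_{S+1}h=\pi(x,S+1)^{\top}\bar m(x)=r(x,S+1)$.
\end{itemize}
This is exactly the mechanism you outline at the end, now with its premise stated as an assumption.

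A smaller point: $g(x)=\Pi(x)^{+}r_x$ is not a functional of the observed distribution. $\Pi(x)$ is latent, and mixture decompositions are not unique in general. Identification has to go through $\mathcal{T}_{S+1}h$ with $h\in\mathcal{H}(r)$ and the observed $p(w\mid x,S+1)$, as in your last paragraph.
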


\section{Proximal Quasi-Bayesian Active Learning (PQAL)}

Having established that the predictor is point-identified in the  target domain if a  cross-domain rank condition holds, we next present a budget-constrained active learning framework for learning such a predictor.
The goal is to learn the CME operator and adapt the bridge function to the  target domain while considering a limited budget of proxy and label queries from the target.
Figure \ref{fig:OVPBAL} shows an overview of the PQAL framework, including
 \textbf{Initial training:} CMEs and the bridge function are trained by  labeled samples from source environments (lines 2-3 of Algorithm \ref{Al:algorithm}), \textbf{Sample selection:} Most informative samples are selected to query from an external information source (lines 5-15 of Algorithm \ref{Al:algorithm}), and 
     \textbf{Adaptation:}  CMEs in the target domain are trained based on obtained proxies, and source bridge parameters $\alpha_s$ are fine-tuned and updated to  $\alpha$ based on  samples with both proxy and labels (lines 16-17 of Algorithm \ref{Al:algorithm}).
The algorithm returns updated bridge parameters $\alpha$ and CMEs.

\begin{figure}
    \centering
\includegraphics[width=0.5\linewidth]{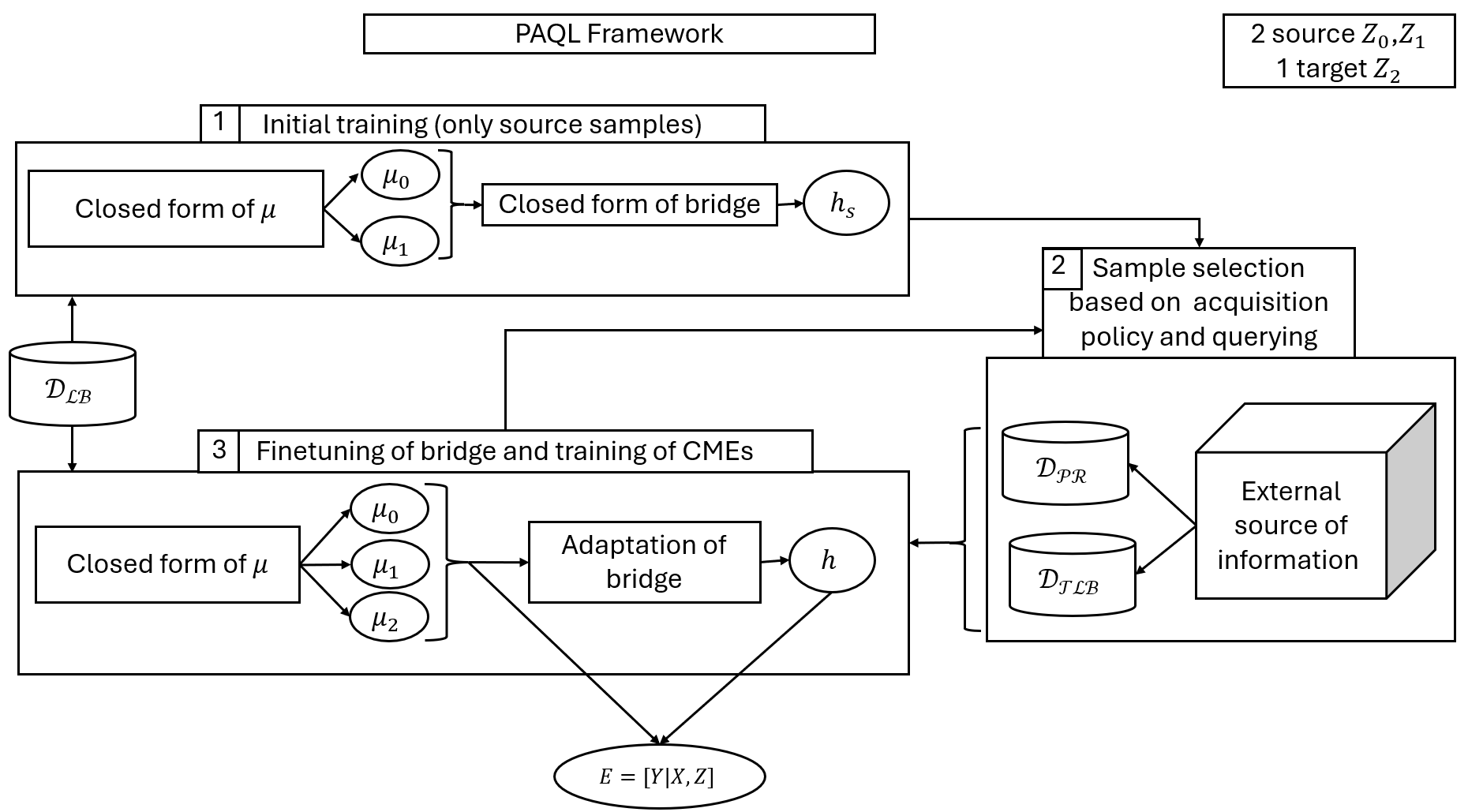}
    \caption{Overview of PQAL framework}
    \label{fig:OVPBAL}
\end{figure}

\subsection{Initial training}

We adapt the KPV-based approach \citep{tsai2024proxy, mastouri2021proximal} to estimate CMEs and bridge functions. KPV trains CMEs  and bridge functions  by solving a two-step ridge regression problem.
In the first step, given a set of labeled samples from the source domains $\mathcal{D}_{\mathcal{LB}}$,
the goal is to solve the  optimization problem
\begin{equation}
   C_{W|X,Z} =\argmin _{C} \frac{1}{m_1} \sum_{i=1}^{m_1} ||\phi(w_i) - C(\phi(x_i)\otimes\phi(z_i))||^2 + \lambda_{CME}||C||^2.
\end{equation}
Since $\mu_{W|X=x,Z=z}=C_{W|X,Z}(\phi(x) \otimes \phi(z))$, the closed-form of CME is calculated as follows \citep{tsai2024proxy,mastouri2021proximal}:
\begin{equation}
\mu_{W|X=x,Z=z}=\sum_{i=1}^{m_1} b_i(x,z) \phi(w_i)
    \label{eq:CME},
\end{equation}
where $b(x,z)=(\mathcal{K}_Z \odot \mathcal{K}_X+\lambda_{CME} m_1 I )^{-1}(\Phi_Z(z) \odot \Phi_X(x))$.
Then, given set of  mutually exclusive labeled samples  $\{(\tilde{x}_j,\tilde{w}_j,\tilde{z}_j,\tilde{y}_j)\}^{m_2}_{j=1}$, the bridge function can be estimated by solving the optimization problem \citep{mastouri2021proximal}
\begin{equation}
    \min_h \frac{1}{m_2} \sum_{j=1}^{m_2} (\tilde{y}_j-<h, \phi(\tilde{x}_j)\otimes \mu_{W|X=\tilde{x}_j,Z=\tilde{z}_j}>)^2+\lambda_{BRG} ||h||_{\mathcal{H}}^2
        \label{eq:brgopt}.
\end{equation}

The optimal closed-form solution for the bridge can be obtained as  \citep{mastouri2021proximal}
\begin{equation}
    \begin{aligned}
\hat{h}_0=\sum_{i=1}^{m_1}\sum_{j=1}^{m_2} \alpha_{i,j} \phi(\tilde{x}_j)\otimes \phi(w_i)  
    \end{aligned}
    \label{eq:brgcl},
\end{equation}
where $
        vec(\alpha) =(I \bar{\otimes} \Gamma) (\lambda_{BRG}m_2I+\Sigma)^{-1} \tilde{Y}$ is the vectorized version of $\alpha$  and $\Sigma= (\Gamma^T \mathcal{K}_W \Gamma) \odot \mathcal{K}_{\tilde{X}}$ and $\Gamma= (\mathcal{K}_Z \odot \mathcal{K}_X + \lambda_{CME} m_1 I) (\mathcal{K}_{Z\tilde{Z}} \odot \mathcal{K}_{X\tilde{X}})$.

\subsection{Acquisition function and sample selection}\label{sc:Ac}

Given a pool of candidate samples without proxies $\mathcal{D}_{\mathcal{PL}}$. The goal is to select a set of samples from $\mathcal{D}_{\mathcal{PL}}$ that decreases predictive uncertainty \citep{smith2023prediction}.  Define design  $d=(x,z)$ and target design $d_*=(x_*,z_*)\sim q(d_*)$. In our setting, EPIG criterion becomes
\begin{equation}
    EPIG(d)=\mathbb{E}_{q(d_{*})}[ I(y;y_*|d,d_*)].
\end{equation}
By considering  that both CME and bridge are trained by ridge regression, which  corresponds to a Gaussian  process \citep{kanagawa2018gaussian}, we conclude that the  outputs  also follow a joint Gaussian distribution. As a result, $[I(y;y_*|d,d_*)]$ has a closed form $  I(y;y_*|d,d_*) = -\frac{1}{2} \log (1-\rho^2(d,d_*)) 
$ obtained based on  conditional entropies, where  $\rho(d,d_*)=\frac{Cov(y,y_*|d,d_*)}{\sqrt{\mathbb{V}(y|d)\mathbb{V}(y_*|d_*)}}$ (see Appendix \ref{sc:DerAct}).
Although based on this equation
  a closed-form for EPIG based on predictive variances ($\mathbb{V}$) and covariances ($Cov$) is achievable, evaluation of this equation is intractable in our setting due to the following reasons: 
 \begin{itemize}
     \item The bridge function is maintained as a point estimate. As a result, the posterior uncertainty of the bridge and outcome level prediction variance are unavailable.
     \item At query time, the proxy $W$ is  unobserved.
 \end{itemize}
 However, we  use the CME posterior uncertainty  as surrogate for EPIG, since the outcome is a linear functional of  CME: $\hat{y}=T(d)(\mu_{W|x,z})$  where $T(d)= <h_{adp}, \phi(x) \otimes (.)>$. 
 Since CMEs for each environment are trained on samples from that environment, we first filter samples by environment id $ X^{r}=\{x^{(r)}_j:z^{(r)}_j=z\}$ , $ Z^{r}=\{z^{(r)}_j:z^{(r)}_j=z\}$. The  CME posterior uncertainty $\sigma_{\mu}$  \citep{chowdhury2020active} is 
\begin{equation}
 \begin{aligned}
        &\sigma^2_{\mu,r}(d)=  \sigma^2_{\mu,r}(x,z)= \\
        &k(x,x)  k(z,z) - (\Phi_{X^{(r-1)}}(x) \odot \Phi_{Z^{(r-1)}}(z))^{\top} (\mathcal{K}_{Z^{(r-1)}} \odot \mathcal{K}_{X^{(r-1)}}+n_{z,{r-1}} \lambda_{CME} I)^{-1} (\Phi_{X^{(r-1)}}(x) \odot \Phi_{Z^{(r-1)}}(z))
 \end{aligned}
     \label{eq:CMEUnc},
 \end{equation}

where $n_{z,{r-1}}$ is the number of current queried samples at round $r-1$ with environment id $z$.  Our acquisition selects samples from environments with large  posterior uncertainty of the CME. In Appendix \ref{sc:GarAcq}, we formally show that PQAL with CME-based uncertainty acquisition  can  recover the 
maximal informative rank from candidate pool.

\subsection{Bridge and CME functions adaptation}

Consider a set of initial  labeled samples from source domains ($\mathcal{D}_{\mathcal{LB}}$), and a set of queried samples with proxy  at  round $r$ ($\mathcal{D}_{\mathcal{PR}}^{r}$) from  source and  target environments. 
 We leverage a small set of labeled target samples 
($\mathcal{D}_{\mathcal{TLB}}^{r}$)
to stabilize bridge adaptation in the finite-sample setting.

\textbf{Updating CMEs:}  As CMEs for each environment are trained on samples from that environment, we first filter samples by environment id $ \mathcal{D}_{\mathcal{PR}}^{r}(z)=\{(x^{(r)}_j,w^{(r)}_j):z^{(r)}_j=z\}$ . We train CMEs for both  source and target environments by adapting Equation \ref{eq:CME} as
\begin{equation}
 \mu^{(r)}_{W|X=x,Z=z}=\sum_{j \in \{j:  z^{(r)}_j=z\}} b_j^{(r)}(x,z) \phi(w^{(r)}_j)
    \label{eq:CMEAdp},
\end{equation}
where $b^{(r)}(x,z)=(\mathcal{K}_{Z^{(r)}} \odot \mathcal{K}_{X^{(r)}}+\lambda_{CME} n_{r,z} I)^{-1} (\Phi_{Z^{(r)}}(z) \odot \Phi_{X^{(r)}}(x))$ and $ |\mathcal{D}_{\mathcal{PR}}^{r}(z)|=n_{r,z}$. Also, $Z^{(r)}$ and $X^{(r)}$ are samples of $X$ and $Z$ in $\mathcal{D}_{PR}$ in iteration $r$.

\textbf{Adaptation of bridge:} After initial training of bridge function $\hat{h}_0$  on source domains (Equation \ref{eq:brgcl}), we obtain optimal parameters $\alpha_s$. We fine-tune this bridge function in the target domain using  samples queried during training.
$\hat{h}_0$ has a stable representation due to the abundance of  covariate and proxy pairs in the source domains. However, during the active learning loop,  only a few target samples are available, which risks overfitting and destabilizing the optimization. To prevent these issues, we represent target samples in the fixed feature space derived from the source domains.
As a result, the bridge is adapted to the target domain while maintaining the stability established during source training. The predicted outcome $f_{\alpha}$ is defined over fixed source basis $\{\phi(w_i)\}_{i=1}^{m_1}$ and $\{\phi(\tilde{x_j})\}_{j=1}^{m_2}$ as
\begin{equation}
    f_{\alpha}(x,w)=\sum_{i=1}^{m_1}\sum_{j=1}^{m_2} \alpha_{i,j} k_{\mathcal{W}}(w_i,w) . k_{\mathcal{X}}(\tilde{x}_j,x),
\end{equation}
where $\alpha \in  \mathbb{R}^{m_1 \times m_2}$ are  the bridge parameters. At the start of adaptation, we initialize $\alpha^{(0)} := \alpha_s$.
Then, we adapt a bridge by solving the regularized optimization problem

\begin{equation}
\begin{aligned}
     & \hat{\alpha}^{(r)}= \argmin_{\alpha} (\mathcal{L}_{source}(\alpha)+ \lambda_{tgt} \mathcal{L}^{(r)}_{target} (\alpha) +  
    \lambda_{sim} \mathcal{L}^{(r)}_{manifold} (\alpha) + \lambda_{reg}\mid\mid \alpha -\alpha^{(0)}\mid \mid_2^2).  \\
&\mathcal{L}_{source}(\alpha)= \frac{1}{\mid \mathcal{D}_{\mathcal{LB}}\mid} \sum_{i=1}^{\mid \mathcal{D}_{\mathcal{LB}} \mid} (y_i-f_{\alpha}(x_i,w_i)) ^2. \\
&\mathcal{L}^{(r)}_{target} (\alpha) = \frac{1}{\mid \mathcal{D}_{\mathcal{TLB}}^{r} \mid}  \sum_{j=1}^{\mid \mathcal{D}^{r}_{\mathcal{TLB}} \mid} (y^{(r)}_j-f_{\alpha}(x^{(r)}_j,w^{(r)}_j)) ^2.
\\
&\mathcal{L}^{(r)}_{manifold} (\alpha) = \sum_{j=1}^{\mid \mathcal{D}^{r}_{\mathcal{TLB}}\mid} \sum_{k=1}^{\mid \mathcal{D}_{\mathcal{PR}}^{r} \mid} \mathcal{S}_{jk} (f_{\alpha}(x^{(r)}_j,w^{(r)}_j) - f_{\alpha}(x^{(r)}_k,w^{(r)}_k))^2.
\end{aligned}
             \label{eq:losses}
\end{equation}
In Equation \ref{eq:losses}, the  first component  $\mathcal{L}_{source}(\alpha)$  evaluates how well the current  bridge, parameterized by  $\alpha$, fits the source data. This loss prevents catastrophic forgetting \citep{kemker2018measuring}. The second component $\mathcal{L}^{(r)}_{target} (\alpha)$   minimizes the prediction error on a few queried target samples with labels. The third component $ \mathcal{L}^{(r)}_{manifold} (\alpha)$ 
encourages smooth prediction over the target manifold  via the similarity kernel $\mathcal{S}$ between generated outcomes from   labeled and unlabeled target samples. We set  $\mathcal{S}$ to a Gaussian RBF kernel on the joint space  $\mathcal{X} \times \mathcal{W}$. 
At prediction time, the final output for $(x_{new},z_{new})$ is $ \hat{y}_{new}= <h_{adp}, \phi(x_{new}) \otimes \mu_{W|x_{new},z_{new}}>$ and $h_{adp}=\sum_{i=1}^{n_1}\sum_{j=1}^{n_2} \alpha_{i,j}  \phi(\tilde{x}_j) \otimes\phi(w_i)$.

\section{Experiments}

We evaluate our approach on synthetic regression tasks consistent with the causal DAG and assumptions in Section \ref{sc:Prob}. After outlining the data generation process and baselines, we study the effect of imperfect proxies on  identifiability, test the robustness of our method to varying degrees of distribution shifts, and evaluate the  effectiveness of the proposed acquisition function. For  experiment setup please refer to Appendix \ref{sc:setup}.

\subsection{Data generation process} \label{sec:data}

Verifying the accuracy of our domain adaptation method requires known structural ground truth that complies with latent shift assumptions. To circumvent this, standard approach is to use semi-synthetic benchmarks \citep{louizos2017causal}. By blending real-world covariates with synthetic confounding mechanisms, we can establish a reliable baseline to  accurately measure performance.  Following this methodology, we consider a diverse set of synthetic and semi-synthetic data generation processes, ranging from controlled regression tasks and the dSprites image dataset \citep{tsai2024proxy} to complex, real-world tabular benchmarks, such as  IHDP \citep{ruth2024infant} and ACS Folktables \citep{ding2021retiring}. Crucially, across all evaluated datasets, we explicitly design the generation process for $W$ to exhibit the characteristics of imperfect proxies.

\textbf{Dataset 1 (Continuous proxy):}
We sample $U$ from a beta distribution with parameters $a$ and $b$.  We generate a continuous proxy  $W$  using a sinusoidal mapping with Gaussian noise $\epsilon$, where $B$ controls the degree of non-injectivity of the  mapping from $U$ to $W$. Equation \ref{eq:datset1} shows a structural causal model (SCM) for Dataset 1.

\textbf{Dataset 2 (Discrete proxy):}
We also consider the case of a discrete proxy, as some  approaches, such as  \citet{prashant2025scalable}, rely on the discrete proxy assumption. In addition, we consider a nonlinear mapping from $U$ and $X$ to $Y$. Equation \ref{eq:Dataset2} shows a SCM for Dataset 2.
\begin{minipage}{0.48\textwidth}
\centering
    \begin{equation}
    \begin{aligned}
U \sim Beta(a,b)\\
  W= \sin(2\pi B U)+\epsilon  \quad \epsilon \sim \mathcal{N}(0,\sigma_w^2) 
   \\ X \sim \mathcal{N}(0,1)\\
        Y = (2U - 1) X 
    \end{aligned}
    \label{eq:datset1}
\end{equation}
\end{minipage}
\hfill
\begin{minipage}{0.48\textwidth}
   \centering
\begin{equation}
    \begin{aligned}
&U \sim Beta(a,b)\\
 & W= 
  \begin{cases}
      \lfloor U.B \rfloor & \text{with probability} \; 1- \eta \\
      R &  \text{with probability} \;  \eta
  \end{cases} 
   \\& X \sim \mathcal{N}(0,1)\\
      &  Y = U^3 X 
    \end{aligned}
    \label{eq:Dataset2}
\end{equation}
\end{minipage}

Where $R$ is sampled uniformly from bins ($R \sim Unif \{0,1,\dots, B-1\}$) and $B$ is the number of bins to control  imperfectness in the  proxy. $\eta$ is the probability of proxy corruption. When the proxy is corrupted, W is reassigned uniformly at random to one of the $B$ bins.

\textbf{Dataset 3 (dSprites):}
To demonstrate PQAL's ability in handling more complex datasets, we use the semi-synthetic dSprites dataset \citep{dsprites17}. Data is generated according to the following SCM, where the latent confounder $U$ is the object orientation. Function $f_{image}$ generates an image from latent  generative factors. For our experiments, we vary only orientation as a confounder, while holding all other generative factors (e.g., shape and scale) fixed. Figure \ref{fig:dspvis} shows the effect of shifts in the distribution of $U$ in the generated image and the data is created as
\begin{equation}
    \begin{aligned}
U \sim Beta(a,b)\\
  W= \sin(2\pi B U)+\epsilon  \quad \epsilon \sim \mathcal{N}(0,\sigma_w^2) 
   \\ X \sim f_{image}(u)+\epsilon_x\\
        Y = (2U - 1) X 
    \end{aligned}
\end{equation}

\textbf{Dataset 4 (IHDP Benchmark):}
We adapt the Infant Health and Development Program (IHDP) dataset \citep{ruth2024infant}, which is built from a real-world clinical trial on premature infants. 
We partition the data by  birth weight to construct distinct source and target environments ($Z$).  We retain the actual clinical covariates ($X$) and environment assignments ($Z$) to preserve realistic marginal distributions. To establish an accurate baseline,  we synthetically generate the unobserved confounder $U$, the outcome $Y$, and the imperfect proxy $W$. Intuitively, $U$  represents unmeasured clinical risks, such as parental stress or unrecorded genetic predispositions. The outcome $Y$ can be viewed as a clinical severity score indicating patient health status. Finally, $W$ can be interpreted as noisy risk indicator such as parental risk score.
\begin{equation}
    \begin{aligned}
        U \sim Beta(a,b) \\
         W= Scale(\sin(2\pi B U)+\epsilon_w ) \quad \epsilon_w \sim \mathcal{N}(0,\sigma_w^2) \\
         Y=(2U-1)X_{proj}+\epsilon_y \quad \epsilon_y \sim \mathcal{N}(0,\sigma_y^2)
    \end{aligned}
\end{equation}
where Scale(.) denotes standardization to zero mean and unit variance, $X_{proj}$ is the mean projection of observed covariates.
\\
\textbf{Dataset 5 (ACS Folktables):}
We utilize the Folktables \citep{ding2021retiring}
dataset that provides records containing highly entangled socioeconomic 
features such as education, age, and occupation across different states of the United States. We retain the actual demographic covariates $X$ and geographic environment assignment ($Z$) to preserve realistic marginal distributions. Because true confounders are inherently latent in this dataset, we establish an accurate baseline by synthetically generating the unobserved regional confounder $U$, the imperfect proxy $W$, and the outcome $y$. Intuitively, $U$ is an unmeasured regional socioeconomic pressures, such as the localized cost of living. Also, $Y$ can be viewed as an  economic mobility score that summarizes overall financial success of individuals.  Finally, $W$ can be interpreted as housing affordability score.
\begin{equation}
    \begin{aligned}
        U \sim Beta(a,b) \\
         W= Scale(\cos(2\pi B U)+\epsilon_w ) \quad \epsilon_w \sim \mathcal{N}(0,\sigma_w^2) \\
         Y=(2U-1)X_{proj}+\epsilon_y \quad \epsilon_y \sim \mathcal{N}(0,\sigma_y^2)
    \end{aligned}
\end{equation}

\subsection{Effect of imperfect proxies on  LECs} \label{sc:EffImp}
Here, we show how  proxy imperfection affects the  required diversity for environments  to point-identify  the predictor $\mathbb{E}[Y|X=x,Z=S+1]$, serving as the empirical diagnostic for the cross-domain rank condition and LEC proxy spanning (Assumption \ref{as:LECMom}) discussed in Section \ref{sc:Theory}. 
As explained in section \ref{sc:Theory}, the conditional mean embedding  for  a fixed $x$ can be expressed as mixture over LECs: $ \mu_{x,z} = \sum_{j=1}^{K}\pi_j(x,z)m_j$ where  $m_j := \mathbb{E}[\phi(W)|U \in O_j]$. Since $m_j$s are invariant across environments, any observable variation between the conditional mean embeddings is based on mixing weights $\Pi(x)$. Therefore, identifying linear independence (environment variation) in mixture weights is mathematically equivalent to the evaluation of the  singular values  of a  matrix that is formed by stacking CMEs of different environments. We report these values as the effective rank.

We use Dataset 2 for this experiment as discrete proxy bins provide a direct control over proxy non-injectivity. Figure \ref{fig:RANVSBins} shows that as the number of bins increases (fine-grained proxies), the effective rank increases, suggesting environments become more distinguishable in CME space.  In the limit with an infinite number of samples, this rank converges to the rank of the population stacked CME matrix (upper-bounded by the number of total environments; e.g. 3 for the case of 2 sources and 1 target).

\begin{figure}
    \centering
\begin{minipage}[t]{0.47 \textwidth}
\vspace{0pt}
\centering
\includegraphics[width=\linewidth]{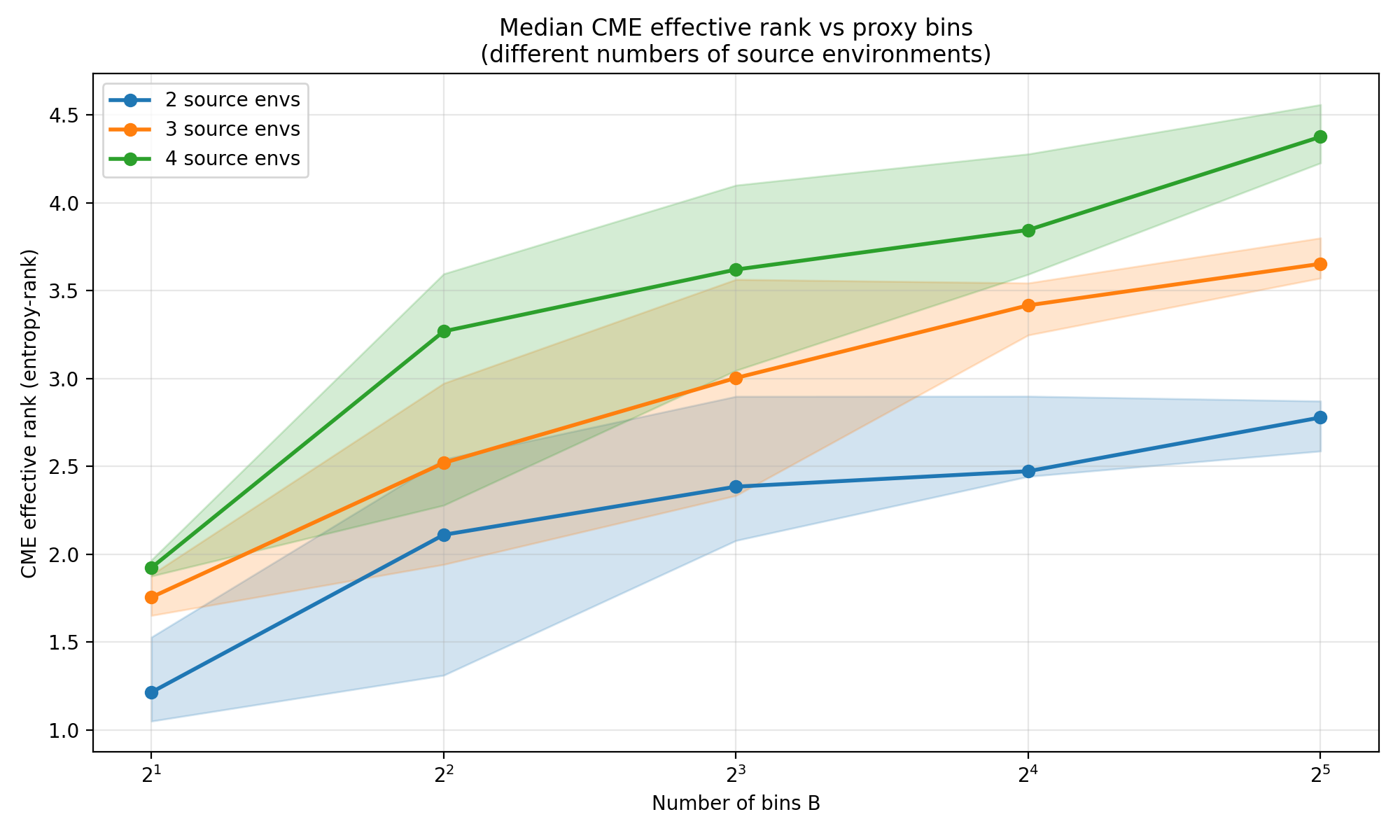}    \captionof{figure}{Effective rank with respect to proxy imperfectness for different number of environments} \label{fig:RANVSBins}
\end{minipage}
    \begin{minipage}[t]{0.47 \textwidth}
    \vspace{-8 mm}
       \centering
    \captionsetup{width=\linewidth}
 \captionof{table}{
 MSE across all datasets. Blue and bold mark the lowest and second-lowest MSEs.}
 \label{tab:D1D2tables}
 \resizebox{\linewidth}{!}{\begin{tabular}{|l|r|r|r|r|r|}

\hline
\multicolumn{6}{|c|}{\textbf{Dataset1}}\\
\hline
Strategy & Degree 1 & Degree 2 & Degree 3  & Degree 4 & Degree 5 \\
\hline

FewShot\_ERM & \textbf{0.0729} & 0.1708 & 0.3061 & 0.5229 & 0.7278 \\
\hline
Proxy\_DA & 0.1083 & 0.2902 & 0.4194 & 0.8638 & 1.0926 \\
\hline

Oracle & \textcolor{blue}{0.0473} & \textcolor{blue}{0.0814} & \textcolor{blue}{0.1093} & \textcolor{blue}{0.1721} & \textcolor{blue}{0.1974} \\
\hline
\Xhline{1pt}
PQAL & 0.0800 & \textbf{0.1388} & \textbf{0.2025} & \textbf{0.3293} & \textbf{0.4015} \\
\hline
\Xhline{1pt}
\multicolumn{6}{|c|}{\textbf{Dataset2}}\\
\hline
FewShot\_ERM & 0.0071 & 0.0212 & 0.0576 & 0.1594 & 0.2787 \\
\hline
Proxy\_DA & 0.0109 & 0.0240 & 0.0732 & 0.1934 & 0.4112 \\
\hline
SC& 0.0098 & 0.0380 & 0.1013 & 0.2547 & 0.4727 \\
\hline
Oracle &  \textcolor{blue}{0.0037} &  \textcolor{blue}{0.0144} &  \textcolor{blue}{0.0339} &  \textcolor{blue}{0.0784} &  \textcolor{blue}{0.1101} \\
\Xhline{1pt}
PQAL & \textbf{0.0053} & \textbf{0.0183} & \textbf{0.0466} & \textbf{0.1201} & \textbf{0.2216} \\
\hline
\Xhline{1pt}
\multicolumn{6}{|c|}{\textbf{dSprites}}\\
\hline

FewShot\_ERM & $0.186 9$ & $0.604 $ & $0.969 $ & $1.199 $ & $1.256$ \\
\hline

Proxy\_DA & \textbf{0.054} & $0.334 $ & $0.708 $ & $1.004 $ & $1.071$\\
\hline

Oracle & \textcolor{blue}{$0.050 $} & \textcolor{blue}{$0.160 $} & \textcolor{blue}{$0.310 $} & \textcolor{blue}{$0.427 $} & \textcolor{blue}{0.449} \\
\hline

PQAL & $0.055 $ & \textbf{0.303} & \textbf{0.641}  & \textbf{0.888} & \textbf{0.946}\\
\hline

\multicolumn{6}{|c|}{\textbf{IHDP}}\\
\hline

FewShot\_ERM & $0.057$ & $0.081 $ & $0.094 $ & $0.091 $ & $0.086$ \\
\hline

Proxy\_DA &  \textbf{0.034} & $0.082 $ & $0.1 $ & $0.104$ & $0.110$\\
\hline

Oracle & \textcolor{blue}{$0.032$} & \textcolor{blue}{$0.053$} & \textcolor{blue}{$0.064 $} & \textcolor{blue}{$0.061 $} & \textcolor{blue}{$0.066$} \\
\hline

PQAL & $0.039$ & \textbf{0.059} & \textbf{0.068}  & \textbf{0.067} & \textbf{0.067}\\
\hline

\multicolumn{6}{|c|}{\textbf{Folktables}}\\
\hline

FewShot\_ERM & $0.038$ & $0.068 $ & $0.101 $ & $0.137 $ & $0.158$ \\
\hline

Proxy\_DA &  $0.034 $ & $0.062 $ & $0.089 $ & $0.121$ & $0.150$\\
\hline

Oracle & \textcolor{blue}{$0.031$} & \textcolor{blue}{$0.057$} & \textcolor{blue}{$0.078 $} & \textcolor{blue}{$0.104 $} & \textcolor{blue}{$0.128$} \\
\hline

PQAL & \textbf{0.032} & \textbf{0.058} & \textbf{0.082}  & \textbf{0.109} & \textbf{0.132}\\
\hline

\end{tabular}}
    \end{minipage}
    
\end{figure}

\subsection{Robustness under varying degrees of distribution shift}

We examine the robustness of PQAL to varying degrees of shift alongside  four closely related approaches. Proxy-DA \citep{tsai2024proxy} is the closest approach to our study, and identifies the predictor $\mathbb{E}[Y|X=x,Z=S+1]$ but uses the same bridge in both  the source and target environments and  trains CMEs in the target environment  based on given proxies.
\citet{prashant2025scalable} (SC) train  a mixture-of-experts (MOE) model to address latent distribution shift in discrete  confounders.  To adjust for the effects of  confounders, the expert models are reweighted in the target using the recovered distribution of   latent confounders inferred from proxies. Furthermore, we compare our approach  to few-shot-ERM, which   minimizes risk using a few labels from the  target sample, to show the necessity of  adjusting for the effects of latent confounders. Finally, we also considered  Oracle, which has access to all samples in the target pool, providing a lower bound for the errors. For a detailed explanation of baselines, please refer to Appendix \ref{sc:Base}.

 For Dataset 1, 3, 4, 5, we compare all methods except  SC, which requires a discrete proxy. 
 All baseline methods are evaluated using a random acquisition strategy, as they either do not use the CME in their structure or are not designed for  active learning. PQAL actively selects samples based on CME uncertainty. To quantify varying degrees of distribution shift for Datasets 1-3, 
we keep the mean of the Beta distribution approximately fixed and vary its variance by selecting different parameters for the distribution. The variance of the Beta distribution is given by $\frac{ab}{(a+b+1)(a+b)^2}$. We use the resulting variances 
as discrete levels of distribution shift, with larger variance corresponding to stronger latent distribution shift. For  IHDP and Folktables datasets (Datasets 4 and 5), we create a shift by progressively deviating 
the mean of the beta distribution. By monotonically adjusting the shape parameters $(\alpha,\beta)$,  we shift the expected value of the unobserved  confounder toward the extreme tail of the distribution. Therefore, PQAL performance is also evaluated under different directional shifts in the latent confounder mean.
(Please refer to Appendix \ref{sc:setup} for details on the degree of shift and experiment setups).

Table \ref{tab:D1D2tables} summarizes the mean square error (MSE) across these shift levels for all datasets. For the smallest degree of shift, Few-Shot ERM and Proxy-DA outperform our approach for dataset 1, IHDP and dSprite, respectively. However, for other shift degrees PQAL  achieves  lowest MSE. As the degree of shift increases, PQAL exhibits substantially better robustness than  other methods, highlighting the importance of actively adapting to latent confounder shifts.

\subsection{Effect of acquisition function}

We evaluate the proposed acquisition function (CME uncertainty) against   expected error reduction (EER)-inspired \citep{cohn1994active}  surrogate that selects representative $x$ (RPX). In addition, we compare our method with a random and an uncertainty-based acquisition function for $Z$ (refer to Appendix \ref{sc:AcStr} for a detailed explanation of each acquisition strategy). As discussed in section \ref{sc:Ac}, defining any acquisition function based on the uncertainty of the bridge is intractable, and as a result, we do not include any bridge-based acquisition strategies in our experiments.

Figure \ref{fig:DataAcq} shows the MSE of PQAL using different acquisition functions for both synthetic datasets. The shaded region indicates the 95\% confidence interval across random seeds. The CME uncertainty has the lowest MSE and the lowest variability compared to all baselines.
Random acquisition improves with obtaining more queries,  but remains  worse than CME uncertainty in the continuous case. $Z$ uncertainty outperforms random acquisition by encouraging the model to cover  environments. However, it performs worse than  CME uncertainty as it does not account for the effect of covariates when selecting  candidates to obtain a proxy.
EER initially selects covariate representative points that reinforce stable, biased patterns. This selection leads to an increase in MSE as the model overfits  the misspecified bridge function, which is blind to latent shifts.

\begin{figure}
     \centering
     \begin{subfigure}[b]{0.48\textwidth}
         \centering
         \includegraphics[width=\textwidth]{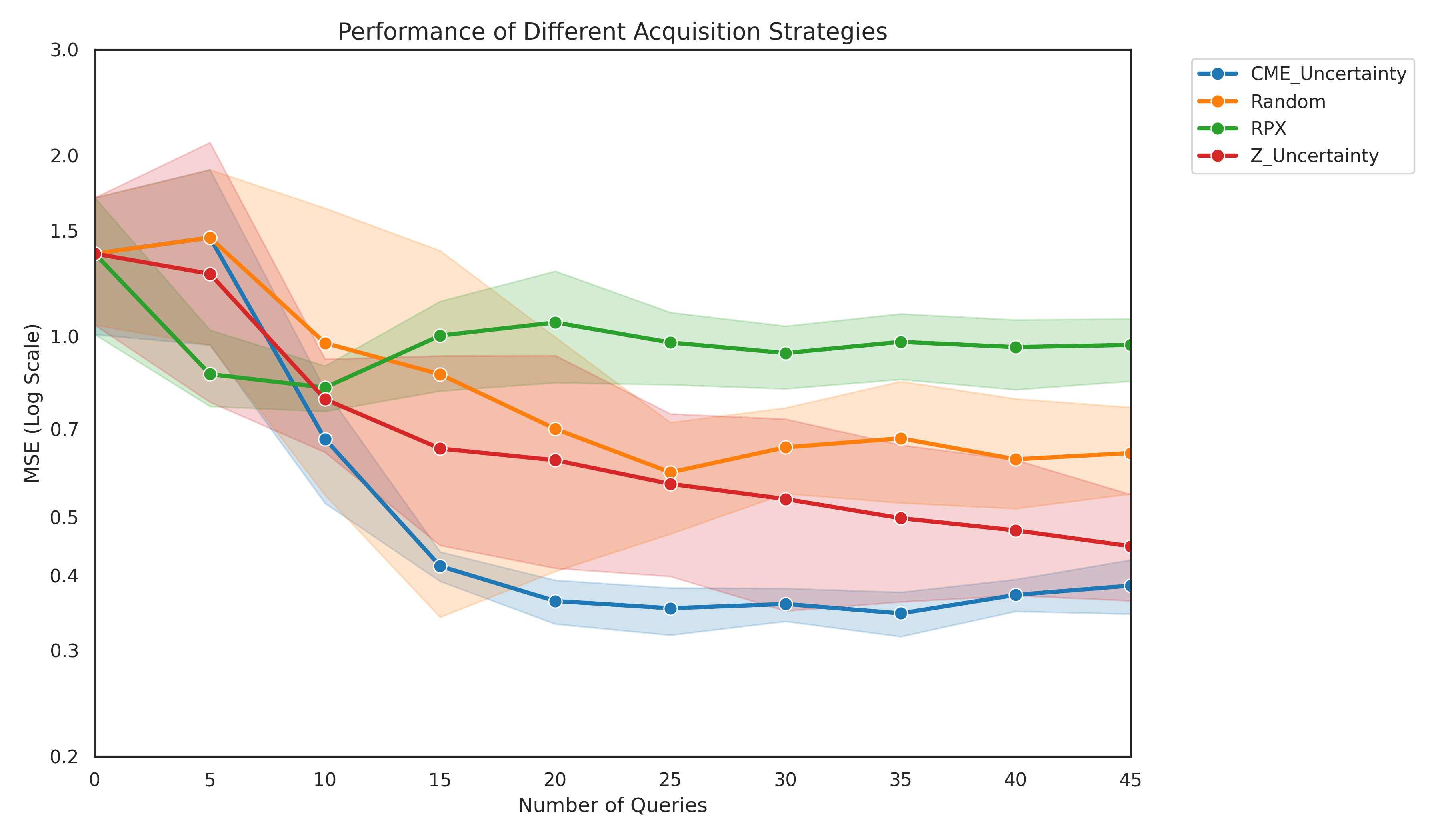}
    
     \end{subfigure}
     \hspace{0.02 \textwidth}
     \begin{subfigure}[b]{0.39\textwidth}
         \centering
         \includegraphics[width=\textwidth]{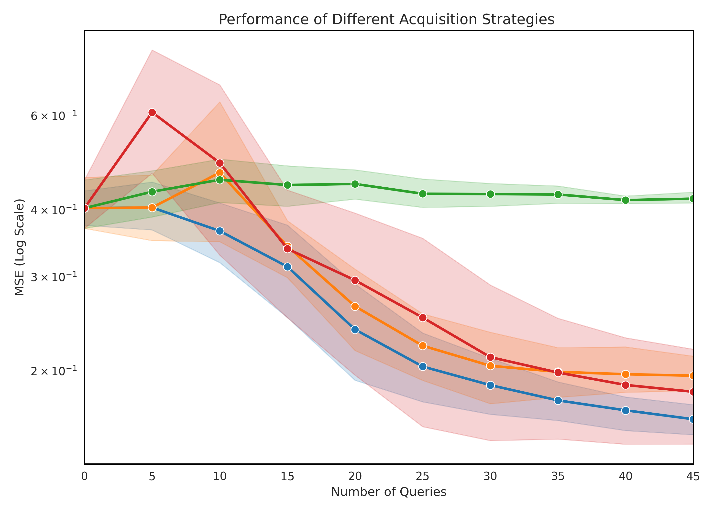}

     \end{subfigure}
        \caption{MSE error for different acquisition functions on Dataset 1(left) and Dataset 2 (right)}
        \label{fig:DataAcq}
\end{figure}

\section{Related work}

\paragraph{Robust predictor under distribution shift:}
Several studies leverage proxies to  train a robust predictor under latent distribution shifts.
\citet{alabdulmohsin2023adapting} use concept and proxy variables to estimate latent subgroup density ratios,  reweight conditional subgroup probabilities, and  recover the  predictor in the target environment without labels. 
\citet{tsai2024proxy} use  PCI to adapt a trained model from the source to the target environment. They  use the same bridge function in both source and target domains and update CMEs using target proxies.
\citet{prashant2025scalable}
 recover the conditional latent confounder distributions based on proxies under a discrete latent confounder shift. They train a mixture-of-experts (MOE) model \citep{xu1994alternative}  for  each value of  latent confounders. To adapt the weights of experts in MOE for the target domain, they used recovered conditional latent confounder distributions.   All aforementioned approaches assume some  form of completeness to establish point-identification guarantees. In contrast, we establish point-identification guarantees for the predictor when completeness fails due to imperfect proxies,  and  recovering latent subgroups and latent variables is not possible.  In addition, rather than  assuming  the diversity condition holds for environments, our framework actively seeks to satisfy it by querying the most informative samples. Furthermore, similar to \citet{tsai2024proxy}, our approach applies to both discrete and continuous proxies and latent confounders.
\paragraph{Causal identification in proxy-based causal inference:}
In causal inference, the use of a proxy to adjust for the effect of unobserved confounders is well established. These approaches often rely on completeness for continuous and invertibility of $P(W\mid U)$ in discrete cases \citep{miao2018identifying}. Recent studies relax completeness at the expense of partial identification (set identification).  
\citet{ghassami2023partial} obtain sharp bounds  on the average treatment effect (ATE) by enforcing dependencies  between proxies, treatment, and outcome. \citet{zhang2022partial} 
bound average causal effect (ACE) by assuming $P(W|U)$ is partially observable and bounded.  While these approaches break completeness, their focus is on the set identification of causal effects such as ATE and ACE. Besides, a source of completeness violation, which is the effect of imperfect proxies, is not formalized in these studies.

\section{Discussion}
While our framework successfully relaxes the completeness assumption to provide point-identification guarantees under imperfect proxies, it relies on specific structural conditions. We outline the failure cases where LEC-based identification becomes too weak:
\begin{itemize}
    \item \textbf{Violation of Remark \ref{RM:NoNLEC} (Degenerate LEC)}:
    If the proxy is entirely uninformative about the latent confounder $U$, it fails to partition the latent space into at least two distinct classes ($|O|=1$). In this degenerate case, the proxy provides no measurable information regarding  domain-induced changes in $U$, making the domain adaptation problem unsolvable.
    \item \textbf{Violation of Definition \ref{df:DE} (Rank-Deficient Environment):}
    Even if the proxy successfully partitions the latent space ($|O| \geq 2$), LEC-based identification remains too weak if the observed source domains are homogeneous. If the environment does not mix the proxy-induced LECs in linearly independent ways, the cross-domain matrix $\Pi(X)$ becomes rank-deficient. Consequently, the target predictor falls back to being set-identified rather than point-identified.
    \item \textbf{Violation of Assumption \ref{as:LECMom} (Incompatibility)}:
    If outcome-relevant variations in the latent confounder fell entirely within a single LEC and thus cannot be distinguished by proxy, the target regression vector $r_x$ will not lie in the range of $\Pi(X)g(x)$. In this scenario, the LEC-level inverse problem is fundamentally unsolvable, and no valid bridge function exists.
    \item \textbf{Violation of Assumption \ref{ass: shift} (Loss of invariance):}
    Our identification relies on the invariant causal mechanisms $P(Y\mid U,X)$ and $P(W \mid U)$ across domains. If the target domain introduces a structural shift in the physical data generation process (e.g., $P_{source}(Y \mid U,X) \neq P_{target}(Y \mid U,X)$), the bridge function learned in the source domain becomes misspecified.
\end{itemize}

\section{Conclusion}
\enlargethispage{1\baselineskip}
This paper addresses domain adaptation  under a latent shift with imperfect proxies. By introducing LECs to capture proxy-induced indistinguishability,  we showed that point-identification of robust predictors is possible through cross-domain rank condition. We  proposed PQAL to  enforce this rank condition. Our experiments supported our claims in the successful adaptation of the predictor to the target environment.

\section*{Acknowledgments}
This work was supported by the Research Council of Finland (Flagship preprogram: Finnish Center for
Artificial Intelligence FCAI and decision 3597/31/2023), ELLIS Finland, EU Horizon 2020 (European Net-
work of AI Excellence Centers ELISE, grant agreement 951847; ERC ODD-ML 101201120; 3597/31/2023,
grant agreement 3597/31/2023), UKRI Turing AI World-Leading Researcher Fellowship (EP/W002973/1).
We also acknowledge the computational resources provided by the Aalto Science-IT Project from Computer
Science IT.

\bibliography{main}
\bibliographystyle{tmlr}

\appendix

\renewcommand{\thefigure}{A\arabic{figure}}
\setcounter{figure}{0}

\section{Proof of theorems}
In this section we provides extended proof for  Lemma \ref{lm:CMEDEC} and Theorems .

\subsection{Proof of Lemma \ref{lm:CMEDEC}}
\label{sc:lmCMEDEC}

\begin{proof}
   Let $(\mathcal{A},\mathcal{W})$  be any measurable set where $A \in \mathcal{A}$. By law of expected iteration \citep{durrett2019probability}, we have:

   \begin{equation}
       \begin{aligned}
           P(W\in A|X=x,Z=z) =\mathbb{E}[1\{W\in A\} |X=x,Z=z] =\\ \mathbb{E}[\mathbb{E}[1\{W\in A\} |U,X=x,Z=z]\bigg{|}X=x,Z=z]=\\
          \mathbb{E}[P(W\in A|U, X=x,Z=z)\bigg{|}X=x,Z=z]
       \end{aligned}
       \label{eq:TE}
   \end{equation}

Based on causal DAG \ref{fig:CAD}, $W \perp (X,Z)  \mid U$, hence: 

\begin{equation}
    P(W \in A|U, X=x,Z=z) = P(W\in A|U)
    \label{eq:Indep}
\end{equation}

By replacing Equation \ref{eq:Indep} in Equation \ref{eq:TE}, we have
\begin{equation}
    P(W\in A|X=x,Z=z) = \mathbb{E}[P(W\in A|U)\bigg{|}X=x,Z=z]
\end{equation}

Based on  regularity assumption (Assumption \ref{ass:Reg}),  LECs $\{O_j\}_{j=1}^{\mid O \mid}$ form a measurable partition of the support of $U$ (Definition \ref{df:LECS}). Applying the law of total probability over this partition leads to:

\begin{equation}
     P(W\in A|X=x,Z=z) =  \mathbb{E}[P(W\in A|U)\bigg{|}X=x,Z=z]=  \sum_{j=1}^{|O|}P(W \in A \mid U \in O_j) P(U \in O_j|X=x,Z=z)
\end{equation}

Define $P_j(A):= P(W \in A| U \in O_j)$ and $\pi_j(x,z):=P(U \in O_j|X=x,Z=z)$, we have:

 \begin{equation}
     P(W \in A|X=x,Z=z)= \sum_{j=1}^{|O|} \pi_j(x,z) P_j(
     A
     )
 \end{equation}

\end{proof}

\subsection{Proof of Theorem \ref{Th:RankCom}}
\label{sc:THRankCom}

    \begin{proof}
   We show that the rank condition in Definition \ref{df:DE} is an LEC-level relaxation of the first completeness assumption. The first completeness assumption requires injectivity over all square-integrable functions of $U$, whereas the rank condition only requires injectivity over functions that are constant within each LEC.

    Let $R \subset L^2(U)$ be the finite-dimensional subspace of functions that are constant on each LEC.
    
    \begin{equation}
        R= span\{1 \{u \in O_j\}: j=1, \dots , \mid O \mid \}.
    \end{equation}
    
     Any $l \in R$ has the form:
     
     \begin{equation}
         l(u)=\sum_{j=1}^{\mid O\mid } \alpha_j 1 \{u \in O_j\}
     \end{equation}
    
    For such l, using the definition of LECs,

    \begin{equation}
        \mathbb{E}[l(U) \mid X=x,Z=z] = \sum_{j=1}^{\mid O \mid} \alpha_j  P(U \in O_j \mid X=x,Z=z) =\sum_{j=1}^{\mid O \mid}  \alpha_j \pi_j(x,z)
    \end{equation}
    
   If completeness holds, then injectivity holds over all of $L^2(U)$. Since $R \subset L^2(U)$, completeness also implies injectivity on $R$. Therefore, completeness distinguishes all LEC-constant functions, but it also requires much  more: it must distinguish functions that are within each LEC.\\
    Next, we show that the rank condition also gives injectivity on $R$, without requiring full completeness. 
    Given distinguishing environment set $Z^*=\{z_1,\dots,z_e\}$, if 
    \begin{equation}
         \mathbb{E}[l(U) \mid X=x,Z=z_i] = 0
    \end{equation}
    for every $z_i \in Z^*$, then
    \begin{equation}
        \Pi(x) \alpha=0,
    \end{equation}
        where $\alpha=(\alpha_1, \dots, \alpha_{\mid O\mid})^T$. By Definition 3, $\Pi(x)$ has full rank $\mid O\mid$ for almost every $x$. Therefore, $\alpha=0$, and hence $l(U)=0$ almost surely for all $l \in R$.  Thus, the rank condition ensures injectivity on the LEC-constant subspace $R$. However, the cross-domain rank condition does not imply completeness.

    Assume proxies are imperfect, so there are some LEC $O_k$ that contains at least two latent values $\{u_1,u_2\}$. Suppose:
    
    \begin{equation}
        P(U=u_1 \mid X=x,Z=z)=  P(U=u_2 \mid X=x,Z=z) \quad \text{for all} \;(x,z)
    \end{equation}

Define a function $l$ supported on $O_k$ by $l(u_1)=1$, $l(u_2)=-1$ and $l(u)=0$ for all $u \notin \{u_1,u_2\}$.
    Then $\mathbb{E}[l(U) \mid X=x,Z=z] =1. P(u_1 \mid x,z)-1. P(u_2 \mid x,z)=0$
    for all $(x,z)$. Thus, completeness fails.
    However, the aggregated LEC mixture weight is:

    \begin{equation}
        \pi_k(x,z)=P(U \in O_k \mid X=x,Z=z) =P(u_1 \mid x,z) + P(u_2 \mid x,z)
    \end{equation}
    
    which can be linearly independent from the mixture weights of other LECs. Thus, the LEC rank condition holds while completeness fails.
  Therefore, the  cross-domain rank condition is weaker from a LEC-level perspective: it only requires separation across LECs, while completeness requires separation over the full latent space, including within-LEC variation. 
    
        \end{proof}

\subsection{Proof of Theorem \ref{th:exbrg}} \label{sc:exbrg}

\begin{proof}
    \textbf{Existence:}
   Fix a distinguishing environment set $Z^{\star}$ (Definition \ref{df:DE}) and let $r(x,z)= \mathbb{E}[Y \mid X=x,Z=z]$. By Lemma \ref{lm:CMEDEC}, for each $z \in Z^{\star}$ and almost surely $x$,

   \begin{equation}
       p(w \mid x,z) =\sum_{j=1}^{K} \pi_j(x,z) p_j(w), \quad K=\mid O \mid.
   \end{equation}

For any measurable function $h(x,w)$, the conditional expectation operator satisfies:

\begin{equation}
    (\mathcal{T}_zh)(x)=\int_{\mathcal{W}} h(x,w)p(w\mid x,z) dw =\sum_{j=1}^{K} \pi_j(x,z)\int_{\mathcal{W}} h(x,w)p_j(w)
    \label{eq:conz}
\end{equation}

Define LEC moment functions:

\begin{equation}
    g_j(x):= \int_{\mathcal{W}} h(x,w) p_j(w)dw, \quad j=1, \dots, K, 
\end{equation}

and write $(g_1(x), \dots, g_k(x))^{\top}$. 

For fixed $x$, stacking Equation \ref{eq:conz} over $z_l \in Z^{\star}$ to obtain the linear system:

\begin{equation}
    r_x=\Pi(x)g(x),
\label{eq:rpi}
\end{equation}

where $r_x := (r(x,z_l))_{l=1}^{e} \in \mathbb{R}^e$ and  $\Pi(x) \in \mathbb{R}^{e \times K} $ has entries $\Pi_{lj}(x)=\pi_j(x,z_l)$.

By the compatibility condition in Assumption \ref{as:LECMom}, for almost surely $x$ there exists a measurable $g(x) \in \mathbb{R}^{K}$ satisfying Equation \ref{eq:rpi}. Moreover, by Assumption \ref{as:LECMom}.3 $g_j \in L_2(\mathcal{X})$ for all $j$.
Next, by Assumption \ref{as:LECMom}.2, $p_j \in L_2(\mathcal{W})$and gram matrix

\begin{equation}
    G \in \mathbb{R}^{K \times K}, \quad G_{jk}:= \int_{\mathcal{W}}p_j(w)p_k(w)dw,
\end{equation}

is invertible. Define  the coefficient vector $\alpha(x):=G^{-1}g(x)$ and construct

\begin{equation}
    h_0(x,w):=\sum_{j=1}^{K} \alpha_j(x)p_j(w),
\end{equation}

we first verify that $h_0 \in L_2(\mathcal{X} \times \mathcal{W})$. Since $g \in L_2(\mathcal{X})^K$ and $a(x)=G^{-1} g(x)$, we have $a_j \in L_2(\mathcal{X})$ for all $j$. In addition, each $p_j \in L_2(\mathcal{W})$  and $K$ is finite.  To show $h_0 \in L_2(\mathcal{X} \times \mathcal{W})$:

\begin{equation}
\begin{aligned}
  &  \mid \mid h_0\mid \mid^2_{L_2(\mathcal{X} \times \mathcal{W})} = \int_{\mathcal{X}} \int_{\mathcal{W}} \mid \sum_{j=1}^{K} a_j(x)p_j(w)|^2 dwdx \leq \\&
K \sum_{j=1}^{K}  \int_{\mathcal{X}} \int_{\mathcal{W}} |a_j(x)|^2|p_j(w)|^2 dwdx=  K \sum_{j=1}^{K} ||a_j(x)||_{L_2(\mathcal{X})}^2||p_j(w)||_{L_2(\mathcal{W})}^2 <\infty. 
\end{aligned}
\end{equation}

where the first inequality is derived from  Cauchy-Schwarz theorem \citep{young1988introduction} and the second equality is application of Fubini's theorem \citep{durrett2019probability}.

Now we compute the moments of $h_0$. For each $k \in \{1, \dots , K\}$,

\begin{equation}
    \int_{\mathcal{W}}h_0(x,w)p_k(w)dw= \sum_{j=1}^{K} a_j(x) \int_{\mathcal{W}}p_j(w)p_k(w)dw =(G a(x))_k =g_k(x)
    \label{eq:brgg}
\end{equation}

By replacing $h$ in Equation \ref{eq:conz} with $h_0$ in Equation \ref{eq:brgg}, for each $z_l \in Z^{\star}$ we have:

\begin{equation}
    (\mathcal{T}_{z_l}h_0)(x)= \sum_{j=1}^{K} \pi_j(x,z_l)g_j(x)=(\Pi(x)g(x))_l=r(x,z_l),
\end{equation}

where the last equality follows from Equation \ref{eq:rpi}. Hence, $\mathcal{T}_zh_0=r(.,z)$ for all $z \in Z^{\star}$. Therefore,

\begin{equation}
    \mathcal{T}_{joint}h=r
\end{equation}

As a result, at least one bridge function exist.

\textbf{Set identification:}
Let $h_1$ and $h_2$ be two solutions of $\mathcal{T}_{joint}h=r$. Then

\begin{equation}
   \mathcal{T}_{joint}h_1=r=\mathcal{T}_{joint}h_2 \Rightarrow \mathcal{T}_{joint}(h_2-h_1)=0
\end{equation}

So  $h_2-h_1 \in ker(\mathcal{T}_{joint})$. Conversely, for any let $d \in ker(\mathcal{T}_{joint})$, we have

\begin{equation}
   \mathcal{T}_{joint} (h_0+d)=r
\end{equation}

Thus the  solution set is:

\begin{equation}
    \mathcal{H}(r)=\{h:\mathcal{T}_zh=r(.,z) (\forall z \in Z^{\star})\} =h_0+ker(\mathcal{T}_{joint})=h_0+\bigcap_{ z \in Z^{\star}} ker(\mathcal{T}_z)
\end{equation}

\end{proof}

\subsection{Proof of Theorem \ref{th:Point}}
\label{sc:Point}

 \begin{proof}

     By Theorem \ref{th:exbrg}, $\mathcal{T}_{joint}h=r$ has at least one solution in  $
    \mathcal{H}(r)=\{h:\mathcal{T}_{joint}h=r\}$. Based on   $\mathcal{T}_{joint}h =(\mathcal{T}_z h)_{z \in Z^{\star}}$, $\mathcal{T}_{joint}h=r$  is equivalent to
    
    \begin{equation}
       \mathcal{T}_zh(.)=r(.,z) \qquad z \in Z^{\star}
    \end{equation}
    
    To show point identification of predictor, given any $h_1,h_2 \in \mathcal{H}_r$, define $\Delta h=h_2 -h_1$. Then
    
   \begin{equation}
      \mathcal{T}_{joint}\Delta h=\mathcal{T}_{joint}h_1 -\mathcal{T}_{joint}h_2=r-r=0
            \label{eq:Ez}
    \end{equation}

Therefore, $\Delta h \in ker(\mathcal{T}_{joint})$. By definition of $\mathcal{T}_{joint}$, this is equivalent to:

    \begin{equation}
        \mathcal{T}_z \Delta h =0 \qquad \forall z \in Z^{\star}
        \label{eq:EDelta}
    \end{equation}

By Lemma \ref{lm:CMEDEC}, the conditional distribution proxy $W$ has a mixture representation. 

\begin{equation}
    P(W|X=x,Z=z) = \sum_{j=1}^{|O|} \pi_j(x,z) P(W|U \in O_j)
\end{equation}

Recall the definition of the conditional expectation operator $\mathcal{T}_z$ acting on any function h:

\begin{equation}
    (\mathcal{T}_zh)(x)=\int_w h(x,w) P(w|x,z) dw
\end{equation}

Substituting the mixture representation from Lemma \ref{lm:CMEDEC} into this integral:

\begin{equation}
    (\mathcal{T}_z h) (x) = \int_{\mathcal{W}} h(x,w) \Big( \sum_{j=1}^{|O|} \pi_j(x,z) P(w|U \in O_j) \Big) dw
\end{equation}

By the linearity of the integral, we can exchange the sum and integral:

\begin{equation}
    (\mathcal{T}_zh)(x) = \sum_{j=1}^{|O|}  \pi_j(x,z) \underbrace{\int_\mathcal{W} h(x,w)  P(w|U \in O_j) dw}_{\mathcal{T}_{O_j}h}
\end{equation}

Thus, the operator itself decomposes as:

\begin{equation}
   \mathcal{T}_z =   \sum_{j=1}^{|O|} \pi_j(x,z)\mathcal{T}_{O_j}
\end{equation}

where $\mathcal{T}_{O_j}$ is the operator defined by the conditional distribution of the proxy given the $j$-th equivalent classes. Similarly based on invariance of causal structure (Assumption \ref{ass: shift}) we have:

\begin{equation}
   \mathcal{T}_{S+1} =   \sum_{j=1}^{|O|} \pi_j(x,S+1) \mathcal{T}_{O_j}.
\end{equation}

By Definition \ref{df:DE} for almost every $x$, the matrix of distinguishing environment set $\Pi (x)=[\pi_j(x,z_l)]_{l,j}, l=1,\dots,e, j=1,\dots,|O|$ has  full rank $K=|O|$. Since dimension of mixture space is $K$, the vectors $\{\pi(x,z):z \in Z^{\star}\}$
 span entire space $\mathbb{R}^K$. As a result, the mixture weight vector $\pi(x,S+1) \in \mathbb{R}^K$ lies in the linear span of source mixture weights. Therefore, there is coefficient $\{\gamma_z(x)\}_{z \in Z^{\star}}$ such that:

 \begin{equation}
    \mathcal{T}_{S+1}=\sum_{z \in Z^{\star}} \gamma_z(x) \mathcal{T}_z
 \end{equation}

Following the same steps leading to Equations \ref{eq:Ez}
and \ref{eq:EDelta}  for target domain we have:

\begin{equation}
      \mathcal{T}_{S+1} \Delta h =   \sum_{z \in Z^{\star}} \Big(\gamma_z(x) \mathcal{T}_z\Big) \Delta h =\sum_{z \in Z^{\star}} \gamma_z(x) \underbrace{\Big(\mathcal{T}_z \Delta h \Big)}_0 =0\
\end{equation}

This implies $\mathcal{T}_{S+1}h_1=\mathcal{T}_{S+1}h_2$. Thus, the predictor $\mathbb{E}[Y|X,Z=S+1]$ is uniquely identified.
    \end{proof}

\section{Derivation of acquisition function} \label{sc:DerAct}
Both CME and bridge training are performed via ridge regression, which corresponds to a Gaussian  process \citep{kanagawa2018gaussian}. As a result, the output  also follows a joint Gaussian distribution. Therefore, we can replace the information gain $[I(y;y_*|d,d_*)]$ with its closed form $[I(y;y_*|d,d_*)]=\mathbb{H}(y|d)+\mathbb{H}(y_*|d_*)-\mathbb{H}(y,y_*|d,d^*)$ where $\mathbb{H}$ is entropy. Each entropy has a closed form as follows:

\begin{equation}
    \begin{aligned}
        \mathbb{H}(y|d) =\frac{1}{2}\log(2 \pi e \mathbb{V}(y|d)) \\
        \mathbb{H}(y_*|d_*)= \frac{1}{2}\log(2 \pi e \mathbb{V}(y_*|d_*)) \\
         \mathbb{H}(y;y_*|d,d_*) = \frac{1}{2} \log ((2 \pi e)^2 det(\Sigma))
    \end{aligned}
\end{equation}

Where $\Sigma=\begin{pmatrix}
\mathbb{V}(y|d) & Cov(y,y_*|d,d_*)\\
Cov(y,y_*|d,d_*) & \mathbb{V}(y_*|d_*)
\end{pmatrix}$ and $\mathbb{V}$ is variance and $Cov$ is covariance matrix.
Then, mutual information can be obtained by:

\begin{equation}
\begin{aligned}
     I(y;y_*|d,d_*) = [\frac{1}{2}\log(2 \pi e \mathbb{V}(y|d))] + [\frac{1}{2}\log(2 \pi e \mathbb{V}(y_*|d_*))] -[\frac{1}{2}\log((2 \pi e)^2 det(\Sigma))] =\\
     \frac{1}{2} \log [\frac{\mathbb{V}(y|d) \mathbb{V}(y_*|d_*)}{det(\Sigma)}]= \frac{1}{2} \log [\frac{\mathbb{V}(y|d) \mathbb{V}(y_*|d_*)}{\mathbb{V}(y|d) \mathbb{V}(y_*|d_*)-Cov(y,y_*|d,d_*)}]
\end{aligned}
\label{eq:infgan}
\end{equation}

Define $\rho(d,d_*)=\frac{Cov(y,y_*|d,d_*)}{\sqrt{\mathbb{V}(y|d)\mathbb{V}(y_*|d_*)}}$ and therefore $Cov(y,y_*|d,d_*)^2=\rho(d,d_*)^2 \mathbb{V}(y|d)\mathbb{V}(y_*|d_*)$. By substituting  $Cov(y,y_*|d,d_*)^2$ in Equation \ref{eq:infgan} we get:

\begin{equation}
     I(y;y_*|d,d_*) = \frac{1}{2} \log [\frac{\mathbb{V}(y|d) \mathbb{V}(y_*|d_*)}{\mathbb{V}(y|d) \mathbb{V}(y_*|d_*)-\rho^2(d,d_*) \mathbb{V}(y|d)\mathbb{V}(y_*|d_*)}]
\end{equation}

Then, by canceling out similar terms, we have:

\begin{equation}
    \begin{aligned}
         I(y;y_*|d,d_*) = -\frac{1}{2} \log (1-\rho^2(d,d_*)) 
    \end{aligned}
\end{equation}

\section{Guarantees for CME uncertainty acquisition}\label{sc:GarAcq}
 In this section, we show that the acquisition function based on CME uncertainty recovers resolvable rank.

Based on \citep{kanagawa2018gaussian}, in the case when observations are noise-free  and the Gram matrix 
 $\mathcal{K}_{XZ}$ is invertible ,  the standard deviation of posterior  ($\sigma_{\mu}(x,z)= \sqrt{k(x,x)  k(z,z)-(\Phi_X(x) \odot \Phi_Z(z))^{\top} (\mathcal{K}_{XZ})^{-1} (\Phi_X(x) \odot \Phi_Z(z))}$),  will  be equal to worst case error  $sup_{f \in \mathcal{H}_k, \mid\mid f\mid \mid_{\mathcal{H}_k} \leq 1}\big(\sum_{i=1}^n \omega_i(x,z)f(x_i,z_i)-f(x,z)\big )$ where $(\omega_1 (x,z), \dots, \omega_n(x,z) ) = \mathcal{K}^{-1}_{XZ} (\Phi_X(x) \odot \Phi_Z(z))$. 
In CME setting, the goal of ridge regression is learning $\mathbb{E}[g(W) \mid X=x,Z=z] = <g, \mu_{x,z}>$ so the worst case error becomes $sup_{ \mid\mid g\mid \mid_{\mathcal{H}_{\mathcal{W}}} \leq 1}\mid <g, \mu_{x,z}> - \sum_{i=1}^n \omega_i(x,z)<g, \mu_{x_i,z_i}>\mid = sup_{ \mid\mid g\mid \mid_{\mathcal{H}_{\mathcal{W}}} \leq 1}\mid  \mu_{x,z} - \sum_{i=1}^n \omega_i(x,z) \mu_{x_i,z_i}\mid $. Intuitively, this means whenever the posterior variance is zero, the conditional mean embeddings $\mu_{x,z}$ lie in the linear span of previously observed embeddings and introduce no new independent direction. Whereas a positive posterior variance certifies  the presence of a new linearly independent component.
In practice, observed data contains noise, and a regularization term $\lambda_{CME}$ is used  when $\mathcal{K}_{XZ}$ is not invertible. However, in such conditions, we can assume a relaxed version of this condition holds by considering an error bound.

\begin{assumption} [Approximate variance span correspondence]
    For a fixed $x \in \mathcal{X}$, let $\mathcal{M}_{r-1}:=span\{\mu_{x,z_1}, \dots, \mu_{x,z_{r-1}}\} \subset \mathcal{H}_{\mathcal{W}}$ be the span of CMEs selected up to round $r-1$. Assume there exist constants $\epsilon \geq 0$  and $\eta >0$ such that for all rounds r and and all candidates $z \in \mathcal{Z}_{\mathcal{PL}}$:
    \begin{equation}
    \begin{aligned}
        \mu_{x,z} \in \mathcal{M}_{r-1} \Rightarrow \sigma^2_{\mu}(x,z) \leq \varepsilon  \\
          dist(  \mu_{x,z}, \mathcal{M}_{r-1}) \geq \eta  \Rightarrow \sigma^2_{\mu}(x,z) \geq \varepsilon +\eta 
    \end{aligned}
    \end{equation}

Here $dist(\mu, \mathcal{M}):= inf_{m \in \mathcal{M}} \mid\mid \mu-m\mid\mid_{\mathcal{H}_{\mathcal{W}}}$ denotes the RKHS distance to subspace $\mathcal{M}$.  
\label{ass:VSC}
\end{assumption}

\begin{remark}[Regularization constraint]
We note that satisfying Assumption \ref{ass:VSC} implicitly constrains the regularization parameter $\lambda_{CME}$ in CME posterior uncertainty in  Equation \ref{eq:CMEUnc}. Specifically, $\lambda_{CME}$ must be sufficiently small such that it does not dampen the posterior variance of orthogonal components below the separation margin $\varepsilon+\eta$. In our experiments, we set $\lambda_{CME}$ as a hyperparameter and tune it to maintain this sensitivity.  
\end{remark}
While the identification results allow an arbitrary set of environments, in practice, the learner can only select from a finite set of available environments. We therefore analyze a pool-based policy that selects z from the candidate pool $\mathcal{Z}_{\mathcal{PL}}$. The following theorem shows that  by using  CME uncertainty,  the resolvable rank within this pool is recoverable, which is the finite-sample analogue of the distinguishing set rank used for identification.

\begin{definition} [Resolvable pool rank]
Let 
    $\zeta_{\eta}(x)$ be largest integer $\zeta$  for which there exist environments $z_1, \dots , z_{\zeta} \in \mathcal{Z}_{\mathcal{PL}}$ such that for each $j=1,\dots, \zeta$,
    \begin{equation}
        dist(\mu_{x,z_j}, span\{\mu_{x,z_1},\dots, \mu_{x,z_{j-1}}\}) \geq \eta
    \end{equation}

Equivalently $\zeta_{\eta}(x)$ is the maximum number of linearly independent directions in $\mathcal{H}_{\mathcal{W}}$ that are separable from 
    each other by a margin $\eta$.
    
\end{definition}

\begin{theorem} [PQAL acquisition and resovable rank]
For fixed $x \in \mathcal{X}$, define data pool information rank

\begin{equation}
    rank_{\mathcal{PL}}(x) := dim(span\{\mu_{x,z}: z\in \mathcal{Z}_{\mathcal{PL}}\}) < \infty 
\end{equation}

  Consider Assumption \ref{ass:VSC}  holds. Consider the acquisition policy:

\begin{equation}
    z_r \in arg \; max_{z \in \mathcal{Z}_{\mathcal{PL}}} \sigma^2_{\mu}(x,z).
\end{equation}

  Then the selected CMEs satisfy

\begin{equation}
    dim(span\{\mu_{x,z_1}, \dots, \mu_{x,z_r} \}) \geq \min \{r, \zeta_{\eta}(x)\}.
\end{equation}

  In particular, after $\zeta_{\eta}(x)$ rounds, the selected set covers all CME directions in the pool that are resolvable at  scale $\eta$.

  \begin{proof}
      Let $\mathcal{M}_{r-1}:= span\{\mu_{x,z_1}, \dots, \mu_{x,z_{r-1}}\}$. If $dim (\mathcal{M}_{r-1}) < \zeta_{\eta}(x)$, then by definition of $\zeta_{\eta}(x)$ there exist some $\bar{z} \in \mathcal{Z}_{\mathcal{PL}}$ such that:
      
      \begin{equation}
          dist(\mu_{x,\bar{z}}, \mathcal{M}_{r-1}) \geq \eta
      \end{equation}

By Assumption \ref{ass:VSC}, this implies $\sigma^2_{\mu}(x,\bar{z}) \geq \varepsilon + \eta $. Since $z_r$ maximizes $\sigma_{\mu}^2(x,.)$, we have $\sigma^2_{\mu}(x,z_r) \geq \varepsilon + \eta$. On the other hand, if $\mu_{x,z_r} \in \mathcal{M}_{r-1}$, Assumption \ref{ass:VSC} would imply $\sigma^2_{\mu}(x,z_r) \leq \varepsilon$, contradicting $\sigma_{\mu}^2(x,z_r) \geq \varepsilon + \eta$. Therefore $\mu_{x,z_r} \notin \mathcal{M}_{r-1}$, and hence $dim(\mathcal{M}_r)=dim(\mathcal{M}_{r-1})+1$. If instead $dim(\mathcal{M}_{r-1}) \geq \zeta _{\eta}(x)$, the lower bound $dim(\mathcal{M}_r) \geq \zeta_{\eta}(x)$ is immediate. The claim follows by induction.
  
  \end{proof}
\end{theorem}

\section{Algorithm}
\FloatBarrier
\begin{algorithm}[H]
\caption{Proximal Quasi Bayesian Active learning (PQAL)} 
\begin{algorithmic}[1]
   \Require $\mathcal{D}_{\mathcal{LB}}=\{(x_i,w_i,z_i,y_i)\}^{m}_{i=1}$: Data set with labels from source domains,
 $\mathcal{D}_{\mathcal{PL}}=\{(x_i,z_i,)\}^{n}_{i=1}$: Pool of data samples without label and proxy
 Budget $R$, Empty set of samples with proxy $\mathcal{D}_{\mathcal{PR}}$,  Empty set of samples with proxy and label $\mathcal{D}_{\mathcal{TLB}}$, number of proxy queries per round $v_{p}$, number of labeled  queries per round $v_{lb}$
 \Ensure 
 Bridge function parameter $\alpha$, CMEs $\{\mu_{W|x,z}\}_{z \in \mathcal{Z}}$, And optimal predictor $\mathbb{E}[Y|X=x,Z=z]$

 \State \textbf{Initial training:}
 \State Train CMEs for each environment by using closed form of Equation \ref{eq:CME}
 \State  Train bridge by using its closed form \ref{eq:brgcl} and obtain bridge parameter $\alpha_0$
\While{Budget $r\leq R$}  
\If {r=1}
 \State Select $v_p$ and $v_{lb}$ samples randomly
\Else
 \State Select $v_p$ and $v_{lb}$ samples samples based on acquisition function in Equation \ref{eq:CMEUnc}
 \State ($(x,z)\in \argmax_{(x,z)}=\sigma^2_{\mu,r}(x,z)$)
 \EndIf
 \State Query $w$ from external source of information for $v_p$ selected samples $D^r_{\mathcal{PR}}=D^r_{\mathcal{PR}}\cup \{(x_i,z_i,w_i)\}_{i=1}^{v_p}$
  \State Query $w$ , $y$ from external source of information for $v_{lb}$ selected samples \State $D^r_{\mathcal{TLB}}=D^r_{\mathcal{TLB}}\cup \{(x_i,z_i,w_i,y_i)\}_{i=1}^{v_{lb}}$
 \State Remove $v_p$ and $v_{lb}$ samples from the pool of candidate samples \State$D_{\mathcal{PL}}= D_{\mathcal{PL}} \setminus	\{(x,z): (x,z) \in( D^r_{\mathcal{PR}} \cup  D^r_{\mathcal{TLB}})\} $
\State Train CME $\mu^{(r)}_{W|X=x,Z=z}$ in target domain by using equation \ref{eq:CMEAdp}
\State  Adapt bridge function parameter $\alpha^{(r)}$  by  Equation \ref{eq:losses}
\EndWhile
\State \Return
$\alpha^{(r)}$,  $\{\mu^{(r)}_{W|x,z}\}_{z \in \mathcal{Z}}$
\end{algorithmic}
\label{Al:algorithm}
\end{algorithm}

\FloatBarrier

\section{Experiment details}

In this section we introduce each baseline with details.

\subsection{Baselines}\label{sc:Base}
\begin{itemize}
    \item \textbf{Proxy-DA}:
\citet{tsai2024proxy} assume completeness and prove the trained bridge in source domain can be used on target domains ($h_s=h_t$). As a result, only updating CMEs  in target domain is enough to adapt the predictor and get robust predictor.
  \item \textbf{SC}:
\citet{prashant2025scalable} uses discrete proxies $W$ to capture the structure of unobserved variables. This approach first trains an encoder to learn the latent distribution $P(U|X)$ by using $(X,W)$. A mixture of experts is then trained on source data, where predictions are combined by using inferred latent at inference time, the model estimates the target latent marginal by comparing the distribution of latent components for the queried target covariates against the source. These estimates are used to recalibrate the mixture weights, allowing the model to adapt its predictions to target domain shift.

\item \textbf{Few-shot-ERM:}
In few-shot Empirical risk minimization (ERM), we concatenate data from source and target domains and train a MLP over concatenation of current source samples and few queried target samples $\{(x,y)\}_{(x,y) \in \mathcal{D}_{\mathcal{LB}} \cup \mathcal{D}_{\mathcal{TLB}}}$  and predictor is obtained by minimizing loss over these samples. 

\item\textbf{Oracle:}
Oracle baseline has access to all target samples from candidate pool.  It is once trained on source labeled samples plus all labeled samples from candidate pools.
We then fine tune shred source bridge based on loss source and target losses in Equation \ref{eq:losses} $
    (\hat{\alpha}^{(r)}= \argmin_{\alpha} (\mathcal{L}_{source}(\alpha)+ \lambda_{tgt} \mathcal{L}^{(r)}_{target} (\alpha) + \lambda_{reg}\mid\mid \alpha -\alpha^{(0)}\mid \mid_2^2)$.

\end{itemize}

\subsection{Compared Acquisition strategies}\label{sc:AcStr}

\begin{itemize}
    \item \textbf{Representative $x$ (RPX):}
    We use an expected error reduction (EER) inspired  surrogate that favors representative samples in covariate space. Concretely, given the current unlabeled target pool $\{x_i\}_{i=1}^{n}$, we form the kernel gram matrix $\mathcal{K}_{X} \in \mathbb{R}^{n \times n}$. Each candidate $x_i$ is scored by its (regularized) global influence,
    \begin{equation}
        s_i=\frac{\sum_{j=1}^n k^2(x_i,x_j)}{k(x_i,x_i)+\lambda}
    \end{equation}

    where $\lambda>0$ is a ridge parameter. Intuitively nominator $\sum_{j=1}^n k^2(x_i,x_j)$ measures how strongly $x_i$ is correlated with the remainder of the pool in kernel space. Thus, points with high scores are central and capture the structure that is shared by many samples in the candidate pool. Selecting these points tends to a reduction in the aggregated posterior variance and  a  tractable approximation for EER.
        \item \textbf{Z-uncertainty (environment coverage acquisition):}
        This strategy prioritizes coverage of underrepresented environments. At each active learning round, we compute the empirical frequency of each environment value $z$ among current labeled samples. Then we   assign each candidate $(x_i,z_i)$ a score that is inversely proportional to the number of labeled points that are already collected from the same environment as candidate $s_i=\frac{1}{n_{z_i}+1}$, where $n_{z_i}$ denotes the current labeled count for environment $z_i$. This acquisition favors environments that have been observed less.  
\item \textbf{Random:} This strategy randomly selects a point from the candidate pool.
\end{itemize}

\subsection{Experiment setup} \label{sc:setup}
For all experiments, we use $35$ samples from each source environment at the beginning of the active learning loop and  query $5$ samples at each iteration. For PQAL and Few-shot-ERM, $2$  of the $5$ samples are equipped with label $y$ and proxy $w$, and the remaining $3$  receive only  proxy $w$.   
In the following, we  mention the specific settings for each experiment. For the robustness and acquisition experiments, we set $B=4$.

\begin{enumerate}
    \item \textbf{Effect of imperfect proxies on  LECs:}
We ran experiments with varying numbers of source environments ($2,3,4$) and a total budget of $45$ queries across $6$ different seeds. We use following beta distributions for source distributions $\{(2,10),(4,8),(8,4),(10,2)\}$ we fix target distributions to $(6,6)$.

 \item \textbf{Robustness under varying degrees of distribution shift on synthetic datasets:}
We ran  experiments with $2$ source environments and a total budget of $60$ queries across $4$ different seeds. We use following beta distributions for source distributions $\{(2, 4), (2.1, 3.9)
\}$,  we vary degrees of shift in target distributions by using $\{ (8.0, 12.0),   (6.0, 6.0),   (5.0, 3.333),  (3.0, 1.286),   (2.0, 0.5)     \}$ beta distributions.

 \item \textbf{Robustness under varying degrees of distribution shift on dSprites dataset:}
We ran  experiments with $2$ source environments and a total budget of $60$ queries across $6$ different seeds. We use following beta distributions for source distributions $\{(2, 4), (2.1, 3.9)
\}$,  we vary degrees of shift in target distributions by using $\{(10,10),(15,5),(25,3),(40,1), (60,0.5)\}$ beta distributions.
Figure \ref{fig:dspvis} shows the shift in image orientation  from source to target environment. 
\item \textbf{Robustness under varying degrees of distribution shift on IHDP dataset}:
We ran experiments with 2 source environments and a total budget of $60$ queries across $6$ different seeds. We partition the data into the domain $Z$ based on  birth weight to simulate distinct clinical environments. source $1$ consists of infants with normal/high birth weight, source 2 includes Medium/low birth weight infants, and the target domain is comprised of extremely low birth weight infants. The latent confounder $U$ is conditioned on these weight-based partitions to model distinct underlying risk profiles. We use the beta distributions $\{(2,5),(5,2)\}$ for the source environments. We quantify the degree of shift by varying the directional mean of target distributions using parameters $\{(3,4),(1,5,5),(1.33,5.17),(1.17,5.33), (1,5.5)\}$  which progressively shifted the expected value of $U$ toward the extreme tail to model unobserved clinical risks.
 \item \textbf{Robustness under varying degrees of distribution shift on Folktables dataset}:
 We conducted  experiments using 2 source environments  (California and Texas) and a total budget of $60$ queries across $6$ different seeds. We partition the data into domain $Z$ based on US states to simulate distinct regional socioeconomic environments, conditioning the latent confounder $U$ on these state-level assignments.  We use the beta distributions $\{(2,5),(5,2)\}$ for the source environments. We quantify the degree of shift by varying the directional mean of target distributions using parameters $\{(3,4),(1,5,5),(1,5.5),(0.5,6), (0.2,6.5)\}$ for states Nevada, New York, Pennsylvania, Ohio, and Alabama, respectively. As a result,  the expected value of $U$ is shifted toward  the extreme tail to model unobserved regional socioeconomic pressures.

\begin{figure}
    \centering
    \includegraphics[width=\linewidth]{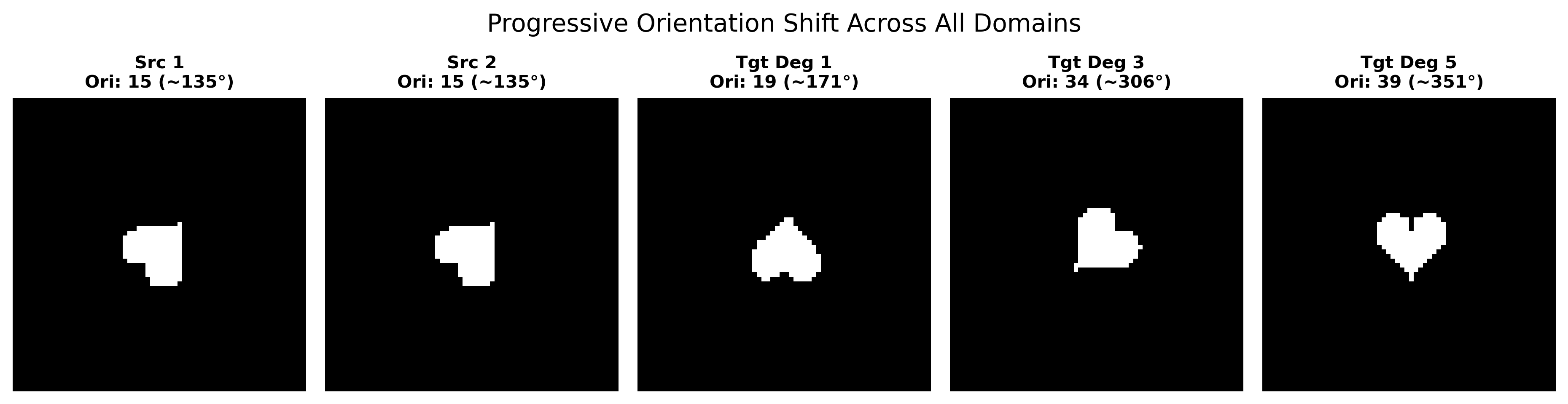}
    \caption{Change in latent variable orientation from source to target domains}
    \label{fig:dspvis}
\end{figure}
 \item \textbf{Effect of acquisition function:}
We ran experiments with $2$ source environments and a total budget of $45$ queries across $4$ different seeds. We use following beta distributions for source distributions $\{(2, 4), (2.1, 3.9)
\}$, and a target distribution $\{ (2.0, 0.5)     \}$.

\end{enumerate}

Note that the source and target distributions in the first experiment is differ from those in the the second and third experiments, as we wanted to maximize latent heterogeneity to ensure any reduction in rank is due to proxy imperfection rather than environment similarity. Conversely, for the robustness and acquisition experiments, we selected  closer source environments, thereby isolating the challenge of adapting to significant distribution shifts in the target.

\end{document}